\renewcommand\footnotetextcopyrightpermission[1]{} 
  \providecommand\BibTeX{{%
    \normalfont B\kern-0.5em{\scshape i\kern-0.25em b}\kern-0.8em\TeX}}}
\begin{document}

\title{Multi-Agent Decentralized Belief Propagation on Graphs}


\author{Yitao Chen}
\affiliation{%
  \institution{Qualcomm}
  \streetaddress{}
  \city{San Diego}
  \country{USA}}
\email{yitachen@qti.qualcomm.com}

\author{Deepanshu Vasal}
\affiliation{%
  \institution{Northwestern University}
  \city{Evanston}
  \country{USA}
}
\email{dvasal@umich.edu}







\begin{abstract}
  We consider the problem of interactive partially observable Markov decision processes (I-POMDPs), where the agents are located at the nodes of a communication network. Specifically, we assume a certain message type for all messages. Moreover, each agent makes individual decisions based on the interactive belief states, the information observed locally and the messages received from its neighbors over the network.

Within this setting, the collective goal of the agents is to maximize the globally averaged return over the network through exchanging information with their neighbors. We propose a decentralized belief propagation algorithm for the problem, and prove the convergence of our algorithm. Finally we show multiple applications of our framework. Our work appears to be the first study of decentralized belief propagation algorithm for networked multi-agent I-POMDPs.
\end{abstract}

\maketitle

\section{Introduction}
In reinforcement learning, Partially Observable Markov Decision Processes (POMDPs) \cite{kaelbling1998planning, boutilier1999decision, russell2002artificial} is a general decision-theoretic framework for planning under uncertainty in a partially observable, stochastic environment. An agent makes decisions rationally in such settings by maintaining beliefs of the physical state and sequentially choosing the optimal actions that maximize the expected value of future rewards. Solutions of POMDPs are mappings from an agent’s beliefs to actions. The drawback of POMDPs in the multi-agent scenario is that the impact of other agents' actions cannot be represented explicitly. Examples of such POMDPs are infinite generalized policy representation \cite{liu2011infinite}, and infinite POMDPs \cite{doshi2013bayesian}. 

Interactive POMDPs (I-POMDPs) \cite{gmytrasiewicz2005framework} generalize POMDPs to multi-agent settings by substituting POMDP belief spaces with interactive belief spaces. More specifically, an I-POMDP substitutes plain beliefs of the state space with augmented beliefs of the state space and the other agents' beliefs and models. The models of other agents included in the new augmented belief space consist of two types: intentional models and subintentional models. An intentional model ascribes beliefs, preferences, and rationality to other agents, while a simpler subintentional model, such as finite state controllers \cite{panella2016bayesian} does not. The augmentation of intentional models forms a hierarchical belief structure that represents an agent's beliefs of the physical state, beliefs of the other agents and their beliefs of others' beliefs, and it can be nested to arbitrary levels. Solutions of I-POMDPs map an agent's belief of the environment and other agents' models to actions. It has been shown that the augmented belief in I-POMDP results in a higher value function compared to one obtained from POMDP, which implies I-POMDPs' modeling superiority.

In this work, we study the problem of POMDPs with collaborative agents. For collaborative POMDPs problem, it is important to specify the interactions bewteen the agents. One appealing option is to have a central controller which has the information of all agents, and determines the actions for all agents. With all the information available to the controller, the problem reduces to a classical POMDP and can be solved by existing single-agent POMDP algorithms. Yet, in a bunch of real-world applications, such as sensor networks \cite{akyildiz2002wireless, rabbat2004distributed} and intelligent transportation systems \cite{adler2002cooperative}, it may be very costly to have a central controller. Moreover, since the central controller needs to communicate with each agent to get information, it increases the communication overhead at each controller. The communication overhead degrades the scalability of the multi-agent system as well as its robustness to malicious attacks. Another option is to make the collaboration between agents implicitly represented by the I-POMDP model. Although agents' actions do not change the other agents' model directly, they can change the other agents’ belief states indirectly, typically by changing the environment in a way observable to the other agents. The influence to the other agents' belief states can be viewed as some form of exchange of information. However, the modeling superiority of I-POMDPs comes at the cost of a drastic increase of the belief space complexity, because the agent models grow exponentially as the belief nesting level increases. Hence, the complexity of the belief representation is proportional to belief dimensions, which is known as the curse of dimensionality. Moreover, due to the fact that exact solutions to POMDPs are PSPACE-complete and undecidable for finite and infinite time horizon respectively \cite{papadimitriou1987complexity}, the time complexity of more generalized I-POMDPs is at least PSPACE-complete and undecidable for finite and infinite horizon, since an I-POMDPs may contain multiple POMDPs or I-POMDPs of other agents.

Given all these disadvantages of the centralized model and the complexity of I-POMDPs, we consider a decentralized model where the agents in I-POMDPs are connected by a communication network. Specifically, let $\{\mathcal{G}=(\mathcal{V}, \mathcal{E})\}$ be a communication network, where $\mathcal{V}$ is the set of nodes, and $\mathcal{E} \subseteq \{(i,j): i,j\in \mathcal{V}\}$ is the set of edges. We assume that each node represents an agent. And agent $i \in \mathcal{V}$ and agent $j \in \mathcal{V}$ can communicate with each other if and only if $(i,j) \in \mathcal{V}$. As such, at each time slot $t$, each agent executes an individual action based on the interactive belief states, the local information and the messages sent from its neighbors, with the goal of maximizing the individual average rewards. The message type is crucial in reducing the complexity of I-POMDPs, for example, if the message is a belief, the agent does not have to maintain complex and infinitely nested beliefs of other agents. We call this model as \emph{networked multi-agent I-POMDPs}, which is presented in Section~\ref{sec:def} in detail.

 \textbf{Main Contribution.} Our contribution in this work is three-fold. First, we formulate interactive POMDPs problem for networked agents, and prove a version of belief update and value iteration update adapted to this setting. Second, we propose a decentralized belief propagation algorithm, and prove the convergence of the proposed algorithm. Third, we show our framework precisely captures the collaboration in decentralized multi-agent cooperative systems by showing multiple applications. 
 
 \textbf{Related Work.} AI literature \cite {bernstein2002complexity, nair2003taming} appeared in a series of studies that extended POMDP to several branches. One of the branches is called decentralized POMDP (DEC-POMDP), which is related to decentralized control problems \cite{ooi1996decentralized}. DEC-POMDP framework assumes that the agents have the common reward function. Furthermore, it is assumed that the optimal joint solution is computed in a central coordinator and then distributed among the agents for execution. \cite{nayyar2013decentralized} shows a variant of DEC-POMDP with a partial historical shared information structure. The framework of I-POMDPs is introduced in \cite{gmytrasiewicz2005framework}, followed by Bayesian inference approximate solutions \cite{han2018learning}. Another branch of extending POMDPs to multiple agents is called multi-agent POMDP (MPOMDP) \cite{messias2011efficient, amato2015scalable}. The MPOMDP framework assumes that agents have joint observations, so it can be simplified to POMDP \cite {pynadath2002communicative} by having a single centralized controller that takes joint actions and receives joint observations. In other words, a DEC-POMDP can be reduced to an MPOMDP in a fully communicative scenario.
 
 From the game-theoretic side, most existing works are based on the framework of Markov games, which was first proposed by \cite{littman1994markov}, and then followed by \cite{littman2001value, lauer2000algorithm, hu2003nash}. This framework applies to the setting with both collaborative and competitive relationships among agents. More recently, several multi-agent reinforcement learning (MARL) algorithms using deep neural networks as function approximators have gained increasing attention \cite{foerster2016learning, gupta2017cooperative, bahdanau2016actor, omidshafiei2017deep, foerster2017stabilising}. A more relevant work is \cite{zhang2018fully}, the authors study a MARL framework with networked agents, where the communication among agents contributes toward the overall performance of MARL in a fully decentralized setting.
 
 The remainder of this paper is structured as follows. We start with an overview of partially observable Markov decision processes, the concept of agent types and interactive POMDPs in Section~\ref{sec:bg}. We formulate the networked multi-agent I-POMDPs and present a decentralized belief propagation algorithms for the networked multi-agent I-POMDPs problem in Section~\ref{sec:def}. We provide theoretical result in Section~\ref{sec:prf}. Several applications of our framework is presented in Section~\ref{sec:app}. We conclude with a brief summary and some future research directions in Section~\ref{sec:con}.
 
 \section{Background}\label{sec:bg}
\subsection{Partially Observable Markov Decision Process}
A partially observable Markov decision process (POMDP) of an agent $i$ is defined as
\begin{flalign}
POMDP_i = \langle S, A_i, T_i, \Omega_i, O_i, R_i\rangle
\end{flalign}
where: $S$ is a set of possible states of the environment. $A_i$ is a set of agent $i$'s actions. $T_i$ is a transition function, i.e., $T_i: S \times A_i \times S \rightarrow [0,1]$ which describes the dynamics of the environment. $\Omega_i$ is the set of agent $i$'s observations. $O_i$ is agent $i$'s observation function, i.e., $O_i: S \times A_i \times \Omega_i \rightarrow [0,1]$. $R_i$ is the reward function for the agent $i$, i.e., $R_i : S \times A_i \rightarrow \mathfrak{R}$.


The belief update step of POMDP is shown below:
\begin{flalign}
b_i^t(s^t) = \beta O_i(o_i^t, s^t, a_i^{t-1}) \sum_{s^{t-1} \in S} b_i^{t-1}(s^{t-1}) T_i(s^t, a_i^{t-1}, s^{t-1})
\end{flalign}
where $\beta$ is the normalizing constant. The belief update step takes into account changes in initial belief $b_i^{t-1}$, action $a_i^{t-1}$, and the new observation $o_i^t$. And $b_i^t(s^t)$ is the new belief for state $s^t$. It is convenient for us to denote the above belief update step for all states in $S$ as $b_i^t = SE(b_i^{t-1}, a_i^{t-1}, o_i^t)$.

We denote agent $i$'s optimality criterion as $OC_i$, which specifies how rewards over time are handled. In this work, we concentrate on the infinite horizon criterion with discounting, i.e., the agent maximizes the expected value of the sum of the discounted rewards of an infinite horizon $E(\sum_{t=0}^{\infty}\gamma^t r_t)$, where $0 \le \gamma \le 1$ is a discount factor. However, our approach can be easily extended to the other criteria.

The utility associated with a belief state, $b_i$ consists of the maximum immediate rewards due to $b_i$, together with the discounted expected sum of utilities associated with the updated belief states $SE_i(b_i, a_i, o_i)$:
\begin{flalign}\label{equ:SE}
U(b_i) &= \max_{a_i \in A_i} \bigg\{ \sum_{s \in S} b_i(s) R_i(s,a_i) + \gamma \sum_{o_i \in \Omega_i} Pr(o_i\rvert a_i, b_i)U(SE_i(b_i, a_i, o_i))\bigg\}
\end{flalign}

And the optimal action, $a_i^*$, is an element of the set of optimal actions, $OPT(b_i)$, for the belief state $b_i$, defined as:
\begin{flalign}
OPT(b_i) &= \arg \max_{a_i \in A_i} \bigg\{ \sum_{s \in S} b_i(s) R_i(s,a_i) + \gamma \sum_{o_i \in \Omega_i} Pr(o_i\rvert a_i, b_i)U(SE_i(b_i, a_i, o_i))\bigg\}
\end{flalign}
\subsection{Agent Types and Frames}
The following two definitions collect POMDP parameters independent of agent implementation and put them into constructs. The representations are convenient for our analysis, so we list them below,

\begin{definition}[Type\cite{gmytrasiewicz2005framework}]
A type of an agent $i$ is, $\theta_i = \langle b_i, A_i, \Omega_i, T_i, O_i, R_i, OC_i\rangle$, where $b_i$ is agent $i$'s state of belief (an element of $\Delta(S)$), $OC_i$ is its optimality criterion, and the rest of the elements are as defined before. Let $\Theta_i$ be the set of agent $i$'s types.
\end{definition}

Given type $\theta_i$, and the assumption that the agent is Bayesian-rational, we denote the set of agent's optimal actions as $OPT(\theta_i).$


\begin{definition}[Frame\cite{gmytrasiewicz2005framework}]
A frame of an agent $i$ is, $\hat{\theta_i} = \langle A_i, \Omega_i, T_i, O_i, R_i, OC_i\rangle$. Let $\hat{\Theta}_i$ be the set of agent $i$'s frames.
\end{definition}

For brevity, we write a type as consisting of an agent's belief together with its frame: $\theta_i = \langle b_i, \hat{\theta}_i \rangle$.

\subsection{Interactive POMDPs}
W.l.o.g., we consider an agent $i$ interacting with only one other agents $j$.

\begin{definition}[I-POMDP]
An interactive POMDP of agent $i$, $\text{I-POMDP}_i$, is:
\begin{flalign}
\text{I-POMDP}_{i,l} = \langle IS_{i,l}, A, T_i, \Omega_i, O_i, R_i \rangle
\end{flalign}
\end{definition}
where $IS_{i,l}$ is a set of interactive states for agent $i$, defined as $IS_{i,l} = S \times M_{j,l-1}$, $l \ge 1$. Here $M_j$ is the the set of possible models of agent $j$, and $l$ is the nesting level. The set of models $M_{j,l-1}$ can be divided into two classes, the intentional models $IM_{j,l-1}$, and subintentional models $SM_{j}$. Thus, $M_{j,l-1} = IM_{j,l-1}  \cup  SM_{j}$.

The intentional models, $IM_{j,l-1}$, ascribe to the other agent beliefs, preferences and rationality in action selection. Thus they are other agents' types. For example, the intentional model for agent $j$ at level $l-1$ can be defined as $\theta_{j,l-1} = \langle b_{j,l-1}, \hat{\theta}_j\rangle$, where $b_{j,l-1}$ is agent $j$'s belief nested to the level $l-1$, $b_{j,l-1} \in \Delta(IS_{j,l-1})$. We omit the details of subintentional model since it is not a focus of this paper.

The interactive states $IS_{i,l}$ can be defined in an inductive manner:
\begin{flalign}
    & IS_{i,0}=S \quad \quad \quad \ \ \ \Theta_{j,0}=\{\langle b_{j,0}, \hat{\theta}_j\rangle:b_{j,0} \in \Delta(IS_{j,0})\} \quad M_{j,0}=\Theta_{j,0} \cup SM_j  \nonumber \\
    & IS_{i,1}=S \times M_{j,0}\quad \Theta_{j,1}=\{\langle b_{j,1}, \hat{\theta}_j\rangle:b_{j,1} \in \Delta(IS_{j,1})\} \quad M_{j,1}=\Theta_{j,1} \cup M_{j,0}  \nonumber\\
    & \cdots \\
    & IS_{i,l}=S \times M_{j,l-1}\quad \Theta_{j,l}=\{\langle b_{j,l}, \hat{\theta}_j\rangle:b_{j,l} \in \Delta(IS_{j,l})\} \quad M_{j,l}=\Theta_{j,l} \cup M_{j,l-1} \nonumber.
\end{flalign}

For the rest of the paper, we omit the level subscription for notation simplicity.

All remaining components in an I-POMDP are similar to those in a POMDP except we also keep the following assumptions in \cite{gmytrasiewicz2005framework}:

\textbf{Model Non-manipulability Assumption (MNM):} Agents' actions do not change the other agents' models directly. 

\textbf{Model Non-observability (MNO):} Agents cannot observe other's models directly.
Next, we define the belief update steps and value iterations for I-POMDPs.

\subsubsection{Belief Update in I-POMDPs}


The next proposition defines the agent $i$'s belief update function, $b_i^t(is^t) = Pr(is^t \rvert o_i^t, a_i^{t-1}, b_i^{t-1})$, where $is^t \in IS_i$ is an interactive state. We use the belief state estimation function, $SE_{\theta_i}$, as an abbreviation for belief updates for individual states so that 
\begin{flalign}
b_i^t = SE_{\theta_i}(b_i^{t-1}, a_i^{t-1}, o_i^t).
\end{flalign}
$\tau_{\theta_i}(b_i^{t-1}, a_i^{t-1}, o_i^t, b_i^t)$ will stand for $Pr(b_i^t \rvert b_i^{t-1}, a_i^{t-1}, o_i^t)$. Further below we also define the set of type-dependent optimal actions of an agent, $OPT(\theta_i)$.

Given the definition and assumptions, the interactive belief update can be performed as in the next proposition.
\begin{proposition}[Belief Update]\label{prop:SE-IPOMDP} Under the MNM and MNO assumptions, the belief update function for an interactive POMDP $\langle IS_i, A, T_i, \Omega_i, O_i, R_i \rangle$, when $m_j$ in $is^t$ is intentional, is:
\begin{flalign}\label{equ:SE-IPOMDP}
b_i^t(is^t) =& \beta \sum_{is^{t-1}:\hat{m}_j^{t-1}=\hat{\theta}_j^t} b_i^{t-1}(is^{t-1}) \sum_{a_j^{t-1}} Pr(a_j^{t-1} \rvert \theta_j^{t-1}) O_i(s^t, a^{t-1}, o_i^t)\times T_i(s^{t-1}, a^{t-1}, s^t) \nonumber\\
&\sum_{o_j^t} \tau_{\theta_j^t}(b_j^{t-1}, a_j^{t-1}, o_j^t, b_j^t) O_j(s^t, a^{t-1}, o_j^t). 
\end{flalign}
\end{proposition}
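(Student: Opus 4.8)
The plan is to derive \eqref{equ:SE-IPOMDP} directly from the definition $b_i^t(is^t) = Pr(is^t \mid o_i^t, a_i^{t-1}, b_i^{t-1})$ by repeated use of Bayes' rule and the chain rule, invoking MNM and MNO together with the inductive structure of $IS_{i,l}$ to justify each conditional-independence step. First I would apply Bayes' rule,
\begin{align*}
b_i^t(is^t) = \beta\, Pr(is^t, o_i^t \mid a_i^{t-1}, b_i^{t-1}), \qquad \beta = \frac{1}{Pr(o_i^t \mid a_i^{t-1}, b_i^{t-1})},
\end{align*}
and then marginalize over the previous interactive state $is^{t-1}$ and over the other agent's action $a_j^{t-1}$ (unobserved, by MNO):
\begin{align*}
b_i^t(is^t) &= \beta \sum_{is^{t-1}} Pr(is^{t-1}\mid a_i^{t-1}, b_i^{t-1}) \\
&\quad\times \sum_{a_j^{t-1}} Pr(a_j^{t-1}\mid is^{t-1}, a_i^{t-1})\, Pr(is^t, o_i^t \mid is^{t-1}, a^{t-1}).
\end{align*}
Here I would argue that $Pr(is^{t-1}\mid a_i^{t-1}, b_i^{t-1}) = b_i^{t-1}(is^{t-1})$, since $a_i^{t-1}$ is a (possibly randomized) function of $b_i^{t-1}$ and carries no extra information about $is^{t-1}$; and that $Pr(a_j^{t-1}\mid is^{t-1}, a_i^{t-1}) = Pr(a_j^{t-1}\mid \theta_j^{t-1})$, since when $m_j^{t-1}$ is intentional, agent $j$ is Bayesian-rational and selects its action solely from its own type $\theta_j^{t-1}$ embedded in $is^{t-1}$, independently of $s^{t-1}$ and of $a_i^{t-1}$. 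This second factor is exactly the $Pr(a_j^{t-1}\mid\theta_j^{t-1})$ appearing in \eqref{equ:SE-IPOMDP}.

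The core of the argument is the one-step factorization of $Pr(is^t, o_i^t \mid is^{t-1}, a^{t-1})$, which I would expand by the chain rule over the components $s^t$, $o_i^t$, and $m_j^t = \langle b_j^t, \hat{\theta}_j^t\rangle$ of $(is^t, o_i^t)$. The physical transition contributes $Pr(s^t \mid s^{t-1}, a^{t-1}) = T_i(s^{t-1}, a^{t-1}, s^t)$, which does not depend on the models; agent $i$'s observation contributes $Pr(o_i^t \mid s^t, a^{t-1}) = O_i(s^t, a^{t-1}, o_i^t)$, depending only on the new state and the joint action; and for the model-update term I would use MNM twice. First, agent $i$'s action does not alter agent $j$'s frame, so $\hat{\theta}_j^t = \hat{m}_j^{t-1}$, which is precisely the constraint $\hat{m}_j^{t-1} = \hat{\theta}_j^t$ that restricts the outer sum. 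Second, agent $j$'s new belief $b_j^t$ is produced by agent $j$'s own belief update, so marginalizing over $j$'s unobserved observation $o_j^t$ gives
\begin{align*}
Pr(b_j^t \mid s^t, s^{t-1}, m_j^{t-1}, a^{t-1}) &= \sum_{o_j^t} Pr(b_j^t \mid b_j^{t-1}, a_j^{t-1}, o_j^t)\, Pr(o_j^t \mid s^t, a^{t-1}) \\
&= \sum_{o_j^t} \tau_{\theta_j^t}(b_j^{t-1}, a_j^{t-1}, o_j^t, b_j^t)\, O_j(s^t, a^{t-1}, o_j^t),
\end{align*}
using the definitions of $\tau_{\theta_j^t}$ and of $O_j$. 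Substituting $T_i$, $O_i$, and this model-update term back into the double sum and collecting factors yields \eqref{equ:SE-IPOMDP}.

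I expect the main obstacle to be making the conditional-independence claims rigorous rather than the bookkeeping: that conditioning on the intentional model $m_j^{t-1}$ renders $a_j^{t-1}$ independent of the rest of $is^{t-1}$ and of $a_i^{t-1}$; that $o_i^t$ and $o_j^t$ are conditionally independent given $(s^t, a^{t-1})$; and that $b_j^t$ depends on the past only through $(b_j^{t-1}, a_j^{t-1}, o_j^t)$. Each of these follows from the dependency structure implied by the inductive definition of $IS_{i,l}$ together with MNM/MNO, and the cleanest route is to write out the dynamic Bayesian network on $(s^{t-1}, m_j^{t-1}, a^{t-1}, s^t, o_i^t, o_j^t, m_j^t)$ and read off the required d-separations as a preliminary lemma, after which the computation above is mechanical.
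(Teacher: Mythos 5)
Your proposal is correct and follows essentially the same route as the paper's own derivation (carried out in Section~\ref{sec:prf} for the networked analogue of this proposition): Bayes' rule, marginalization over $is^{t-1}$ and $a_j^{t-1}$, factorization of the one-step kernel into $O_i$ and $Pr(is^t\mid a^{t-1}, is^{t-1})$, decomposition of the latter into $T_i$, a frame-identity indicator (absorbed into the scope of the outer sum), and the expansion of $Pr(b_j^t\mid\cdot)$ over $o_j^t$ as $\sum_{o_j^t}\tau_{\theta_j^t}(\cdot)O_j(\cdot)$. The only cosmetic difference is your explicit DBN/d-separation framing of the conditional-independence steps, which the paper leaves implicit.
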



\subsubsection{Value Function and Solution in I-POMDPs}
Analogously to POMDPs, each belief state in I-POMDP has an associated value reflecting the maximum reward the agent can expect in this belief state:
\begin{flalign}\label{equ:VI}
U(\theta_i) &= \max_{a_i \in A_i} \bigg\{ \sum_{is} b_i(is) ER_i(is,a_i) + \gamma \sum_{o_i \in \Omega_i} Pr(o_i\rvert a_i, b_i)U(\langle SE_{\theta_i}(b_i, a_i, o_i), \hat{\theta}_i)\bigg\}
\end{flalign}
where, $ER_i(is, a_i) = \sum_{a_j} R_i(is, a_i, a_j) Pr(a_j \rvert m_j)$. Equation~(\ref{equ:VI}) is a basis for value iteration in I-POMDPs.

Agent $i$'s optimal action, $a_i^*$, for the case of infinite horizon criterion with discounting, is an element of the set of optimal actions for the belief state, $OPT(\theta_i)$, defined as 
\begin{flalign}\label{equ:OA}
OPT(\theta_i) &= \arg \max_{a_i \in A_i} \bigg\{ \sum_{is} b_i(is) ER_i(is,a_i) + \gamma \sum_{o_i \in \Omega_i} Pr(o_i\rvert a_i, b_i)U(\langle SE_{\theta_i}(b_i, a_i, o_i), \hat{\theta}_i)\bigg\}.
\end{flalign}

\section{Networked Multi-Agent I-POMDP}\label{sec:def}
Consider a set $\mathcal{V}$ of $N$ agents, labeled by an index $i = 1,2,...,N$. Their interaction (communication) is modeled by a graph $\mathcal{G} = (\mathcal{V},\mathcal{E})$ where an edge $(i,j)$ is in $\mathcal{E}$ if and only if agent $i$ interacts (communicates) with agent $j$. We assume that $\mathcal{G}$ is connected, i.e., there is a path from any node $i$ to any other node $j$. We denote the set of neighbors of agent $i$ as $\partial i$. The network induced by interaction and communication could be different, but here we do no distinguish them for simplicity.

Let us define type of message and message before we proceed to define the networked multi-agent I-POMDPs.
\begin{definition}[Message Type $\&$ Message]
We define the set of message types as $\mathcal{M}$ $=$ $\{\times_{i \in [N]} A_i,$ $ \Delta(S), \times_{i \in [N]} \Omega_i\}$, i.e., there are three types of messages, `action', `belief state' and `observation'. A type of message $M$ is an element of the message type set $\mathcal{M}$, i.e., $M \in \mathcal{M}$. A message $\mu_i^t$ is an element of message type $M$, i.e. $\mu_i^t \in M$.
\end{definition}
For example, a message sent by an agent $i$ at time slot $t$ of type `action' can be represented by $\mu_i^t = a_j^{t-1}$.

Now we can define the networked multi-agent I-POMDP.
\begin{definition}[Networked Multi-agent I-POMDP]\label{NMA-I-POMDP}
A networked multi-agent I-POMDP is characterized by a tuple $\langle \mathcal{G}, \{IS_i\}_{i\in [N]},\{A_i\}_{i \in [N]}, \{T_i\}_{i \in [N]}, \{\Omega_i\}_{i \in [N]}$, $\{O_i\}_{i \in [N]},\{R_i\}_{i \in [N]}, M \rangle$.
\end{definition}
\begin{itemize}
    \item $\mathcal{G}$ is the communication network.
    \item For each agent $i$, the interactive state $IS_i$ is defined as $IS_i = S \times M_{\partial i}$, where $\partial i$ is the set of neighbors of agent $i$ in the communication network $\mathcal{G}$.
    \item For each agent $i$, the action space is $A_i$.
    \item For each agent $i$, under the MNM assumption, the transition model is defined as $T_i: S \times A_i \times A_{\partial i} \times S \rightarrow [0,1]$.
    \item For each agent $i$, $\Omega_i$ is defined as before in the I-POMDP model.
    \item For each agent $i$, under the MNO assumption, the observation transition function is defined as $O_i: S \times A_i \times A_{\partial i} \times \Omega_i \rightarrow [0,1]$.
    \item For each agent $i$, $R_i$ is defined as $IS_i \times A_i \times A_{\partial i} \rightarrow \mathfrak{R}$ 
    \item $M$ is the message type.
\end{itemize}

Similar to I-POMDPs, we can define agent $i$'s belief update function. For simplicity, from now on we assume $\partial i = \{j\}$ (we can extend it to more than one neighbor case easily), the belief update function for agent $i$ is $b_i^t(is^t) = Pr(is^t \rvert o_i^t, a_i^{t-1}, b_i^{t-1}, \mu_j^t)$, where $is^t \in IS_i$ is an interactive state and $\mu_j^t$ is the message sent by an agent $i$ at time slot $t$. We use the belief state estimation function $SE_{\theta_i}$, as an abbreviation for belief updates for individual states so that 
\begin{flalign}
b_i^t = SE_{\theta_i}(b_i^{t-1}, a_i^{t-1}, o_i^t, \mu_j^t).
\end{flalign}
The  next  proposition  defines  the  agent $i$'s  belief  update function in detail.

\begin{proposition}[Belief Update]\label{prop:SE-NMA-I-POMDP} Under the MNM and MNO assumptions, the belief update function for agent $i$ of a networked multi-agent I-POMDP $\langle \mathcal{G}, \{IS_i\}_{i\in [N]},\{A_i\}_{i \in [N]}, \{T_i\}_{i \in [N]}, \{\Omega_i\}_{i \in [N]},$ $\{O_i\}_{i \in [N]}$, $\{R_i\}_{i \in [N]}, M \rangle$, when $m_j$ in $is^t$ is intentional, is:
\begin{itemize}
\item When message type is `action':
\begin{flalign}
b_i^t(is^t) =& \beta \sum_{is^{t-1}:\hat{m}_j^{t-1}=\hat{\theta}_j^t} b_i^{t-1}(is^{t-1}) O_i(s^t, a^{t-1}, o_i^t)  T_i(s^{t-1}, a^{t-1}, s^t) \nonumber \\
&\sum_{o_j^t} \tau_{\theta_j^t}(b_j^{t-1}, a_j^{t-1}, o_j^t, b_j^t) 
O_j(s^t, a^{t-1}, o_j^t).
\end{flalign}
\item When message type is `belief state':
\begin{flalign}
b_i^t(is^t) =& \beta \sum_{is^{t-1}:\hat{m}_j^{t-1}=\hat{\theta}_j^t} b_i^{t-1}(is^{t-1}) \sum_{a_j^{t-1} \in OPT(\theta_j)}  Pr(a_j^{t-1} \rvert \theta_j^{t-1}) \nonumber \\
&O_i(s^t, a^{t-1}, o_i^t) T_i(s^{t-1}, a^{t-1}, s^t).
\end{flalign}
\item When message type is `observation':
\begin{flalign}
b_i^t(is^t) =& \beta \sum_{is^{t-1}:\hat{m}_j^{t-1}=\hat{\theta}_j^t} b_i^{t-1}(is^{t-1}) \sum_{a_j^{t-1}} Pr(a_j^{t-1} \rvert \theta_j^{t-1})  O_i(s^t, a^{t-1}, o_i^t)\nonumber \\ &T_i(s^{t-1}, a^{t-1}, s^t) \tau_{\theta_j^t}(b_j^{t-1}, a_j^{t-1}, o_j^t, b_j^t) O_j(s^t, a^{t-1}, o_j^t). 
\end{flalign}
\end{itemize}
\end{proposition}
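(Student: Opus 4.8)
The plan is to derive all three formulas from a single Bayes-rule expansion of $Pr(is^t \mid o_i^t, a_i^{t-1}, b_i^{t-1}, \mu_j^t)$, mirroring the derivation behind Proposition~\ref{prop:SE-IPOMDP} but carrying the extra conditioning variable $\mu_j^t$ through every step, and then reading off each case by substituting what the message reveals.

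First I would write $b_i^t(is^t) = \beta\, Pr(is^t, o_i^t, \mu_j^t \mid a_i^{t-1}, b_i^{t-1})$ with $\beta = 1/Pr(o_i^t,\mu_j^t\mid a_i^{t-1}, b_i^{t-1})$, and marginalize the numerator over the previous interactive state $is^{t-1}=(s^{t-1},m_j^{t-1})$, over agent $j$'s previous action $a_j^{t-1}$, and over agent $j$'s current observation $o_j^t$. Writing $is^t=(s^t,m_j^t)$ with $m_j^t=\langle b_j^t,\hat\theta_j^t\rangle$ intentional, I would factor the summand along the causal order of one transition step: the prior $b_i^{t-1}(is^{t-1})$; agent $j$'s action rule $Pr(a_j^{t-1}\mid\theta_j^{t-1})$; the dynamics $T_i(s^{t-1},a^{t-1},s^t)$ with $a^{t-1}=(a_i^{t-1},a_j^{t-1})$; the two observation channels $O_i(s^t,a^{t-1},o_i^t)$ and $O_j(s^t,a^{t-1},o_j^t)$; agent $j$'s belief transition $\tau_{\theta_j^t}(b_j^{t-1},a_j^{t-1},o_j^t,b_j^t)$; and a message-generation factor $Pr(\mu_j^t\mid\cdot)$. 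As in the pure I-POMDP case, the MNM assumption is what makes $\hat m_j$ static, so the sum is restricted to $is^{t-1}$ with $\hat m_j^{t-1}=\hat\theta_j^t$, and MNO is what keeps $o_i^t$ from depending on $m_j$ directly; with $\mu_j^t$ dropped, this collapses to Equation~(\ref{equ:SE-IPOMDP}).

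Next I would specialize the message factor $Pr(\mu_j^t\mid\cdot)$ for each type. For type `action' ($\mu_j^t=a_j^{t-1}$), conditioning on $\mu_j^t$ fixes $a_j^{t-1}$, so $\sum_{a_j^{t-1}}Pr(a_j^{t-1}\mid\theta_j^{t-1})$ degenerates to its single matching term and the weight $Pr(a_j^{t-1}\mid\theta_j^{t-1})$, being constant in $is^{t-1}$, is absorbed into $\beta$; this gives the first formula. For type `observation' ($\mu_j^t=o_j^t$), the outer marginalization $\sum_{o_j^t}$ collapses to the reported value, leaving the $\tau_{\theta_j^t}\,O_j$ product evaluated there, which is the third formula. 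For type `belief state', the message lets agent $i$ reconstruct $\theta_j$ and hence $OPT(\theta_j)$, so the action rule is supported on $OPT(\theta_j)$ and the action sum is restricted accordingly; moreover, since agent $j$'s updated belief is communicated directly, agent $i$ need not infer it from agent $j$'s (unobserved) observation, so the marginalization $\sum_{o_j^t}\tau_{\theta_j^t}(\cdots)O_j(\cdots)$ is redundant and drops out, yielding the second formula.

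The step I expect to be the main obstacle is the conditional-independence bookkeeping behind the factorization: I must check that introducing $\mu_j^t$ does not open a back-door path coupling $o_i^t$ or $s^t$ with $s^{t-1}$ in a way that would block pulling $O_i$ and $T_i$ out of the $is^{t-1}$-sum, and, in the `belief state' case, I must argue carefully why pinning agent $j$'s new belief makes the $o_j^t$-marginalization genuinely unnecessary rather than merely re-weighted. I would settle both points by drawing the dynamic influence diagram for one transition step, reading off the d-separations that hold given $\{o_i^t,a_i^{t-1},b_i^{t-1},\mu_j^t\}$, and invoking MNM and MNO at exactly the places the original I-POMDP derivation does; the remainder is the routine Bayes expansion and collection of normalization-independent terms into $\beta$.
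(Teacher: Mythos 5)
Your overall plan matches the paper's proof: a Bayes expansion of $Pr(is^t \rvert o_i^t, a_i^{t-1}, b_i^{t-1}, \mu_j^t)$, marginalization over $is^{t-1}$, $a_j^{t-1}$, $o_j^t$, factorization into $O_i$, $T_i$, $\tau_{\theta_j^t}$, $O_j$, and removal of the frame-identity indicator by restricting the sum, followed by collapsing whichever variable the message reveals. (The paper only carries this out for the `action' case and asserts the other two are analogous; you sketch all three.) However, there is a genuine gap in your handling of the `action' case, which is precisely the case the paper proves. You keep the likelihood factor $Pr(a_j^{t-1} \rvert \theta_j^{t-1})$, collapse the action sum at $a_j^{t-1}=\mu_j^t$, and then claim this weight is ``constant in $is^{t-1}$'' and can be absorbed into $\beta$. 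That claim is false: the interactive state is $is^{t-1}=(s^{t-1},\theta_j^{t-1})$, so $\theta_j^{t-1}$ (in particular $b_j^{t-1}$, hence $OPT(\theta_j^{t-1})$) varies across the summands, and $Pr(\mu_j^t \rvert \theta_j^{t-1})$ varies with it. It is a genuine reweighting of the prior over $is^{t-1}$ (indeed, it is exactly how observing $j$'s action informs $i$ about $j$'s model), not a normalization constant. Followed consistently, your route yields a formula with an extra factor $Pr(\mu_j^t \rvert \theta_j^{t-1})$ inside the sum over $is^{t-1}$, which does not match the first display of the proposition.

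The paper avoids this not by absorbing anything into $\beta$ but by writing the action marginalization with the posterior $Pr(a_j^{t-1}\rvert b_j^{t-1},\mu_j^t)$, i.e. conditioning $j$'s action on the received message itself; since the message \emph{is} the action, this posterior is a point mass equal to $1$ at $a_j^{t-1}=\mu_j^t$, so the factor disappears identically rather than being pushed into the normalizer (step $(a)$ of Equation~(\ref{equ:belief-update})). To repair your argument you would need to adopt that conditioning (or otherwise justify dropping the likelihood weight), not the constancy claim. Your `observation' case is consistent with the stated formula (the collapsed sum legitimately leaves $\tau_{\theta_j^t}\,O_j$ evaluated at the reported $o_j^t$), and your `belief state' sketch points in the right direction, but there too the elimination of the $\sum_{o_j^t}\tau_{\theta_j^t}O_j$ term should be argued via the same device (the factor $Pr(b_j^t \rvert \cdot,\mu_j^t)$ becoming degenerate once $b_j^t$ is communicated) rather than by an informal ``redundancy'' appeal.
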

We leave the proof of Propositions~\ref{prop:SE-NMA-I-POMDP} to Section~\ref{sec:prf}.

The value function $U(\theta_i)$ is 
\begin{flalign}\label{equ:VI-NMA}
U(\theta_i, \mu_j) &= \max_{a_i \in A_i} \bigg\{ \sum_{is} b_i(is) ER_i(is,a_i)+  \gamma \sum_{o_i \in \Omega_i} Pr(o_i\rvert a_i, b_i)U(\langle SE_{\theta_i}(b_i, a_i, o_i, \mu_j), \hat{\theta}_i)\bigg\}
\end{flalign}
where $ER_i(is, a_i) = \sum_{a_j} R_i(is, a_i, a_j) Pr(a_j \rvert m_j, \mu_j)$.

And the set of optimal actions for agent $i$ is defined as,
\begin{flalign}\label{equ:OA-NMA}
OPT(\theta_i, \mu_j) &= \arg \max_{a_i \in A_i} \bigg\{ \sum_{is} b_i(is) ER_i(is,a_i) +  \gamma \sum_{o_i \in \Omega_i} Pr(o_i\rvert a_i, b_i)U(\langle SE_{\theta_i}(b_i, a_i, o_i, \mu_j), \hat{\theta}_i)\bigg\}.
\end{flalign}

Note the Equation~(\ref{equ:VI-NMA}) can be rewritten in the following form $U^n = HU^{n-1}$. Here $H: B \rightarrow B$ is a backup operator, and is defined as,
\begin{flalign}
HU^{n-1}(\theta_i,\mu_j) = \max_{a_i \in A_i} h(\theta_i, a_i, \mu_j, U^{n-1}),
\end{flalign}
where $h: \Theta_i \times A_i \times M \times B \rightarrow \mathbb{R}$ is,
\begin{flalign}
h(\theta_i, a_i, \mu_j, U) &= \sum_{is} b_i(is) ER_i(is,a_i)+  \gamma \sum_{o_i \in \Omega_i} Pr(o_i\rvert a_i, b_i)U(\langle SE_{\theta_i}(b_i, a_i, o_i, \mu_j), \hat{\theta}_i).
\end{flalign}
\subsection{Algorithm}
Now we are ready to present the decentralized belief propagation algorithm for networked multi-agent systems. The algorithm requires each agent to maintain a belief on its interactive states, while allows each agent $i$ share messages of certain type with its neighbors on the network. In this way, each agent is able to improve the value function and thus the current policy. 

\begin{algorithm}                               
\caption{Decentralized Belief Propagation Algorithm}          
\label{alg:DecBP} 

  \begin{algorithmic} 
    \State \textbf{Input}: Initialize $b_i(is)$, $U(\theta_i)$ for all $is \in IS_i$ and for all $i \in [N]$ 
    \Repeat :
		\For{ all $i$ in $[N]$}
		    \State Observe $o_i^t$, and reward $r_i^t$.
		    \State Send message $\mu_i^t$ to all neighbors $\partial i$. 
		    \State Update the belief given the received messages 
		    \State $b_i^t = SE_{\theta_i}(b_i^{t-1}, a_i^{t-1}, o_i^t, \{\mu_j^t\}_{j \in \partial i})$. 
		    \State Update the value function
		    \State $U(\theta_i) \leftarrow HU(\theta_i)$. 
		    \State Select and execute action $a_i^t$.
		\EndFor
	\Until{convergence}
  \end{algorithmic}
\end{algorithm}

Note similar to the typical belief propagation algorithm, all agents send/receive messages simultaneously and then update their beliefs simultaneously.
\section{Theoretical Results}\label{sec:prf}
We start this section by proving Proposition~\ref{prop:SE-NMA-I-POMDP} for the case that message type is `action', i.e., $M = \times_{i \in [N]} A_i$ and more specifically $\mu_j^t = a_j^{t-1}$. The belief update step for other message types can be derived in similar ways.
\begin{proof}
We start by applying the Bayes Theorem:
\begin{flalign}\label{equ:belief-update}
b_i^t(is^t)&=Pr(is^t\rvert o_i^t, a_i^{t-1}, b_i^{t-1}, \mu_j^t) \nonumber \\
&= \frac{Pr(is^t,o_i^t\rvert a_i^{t-1}, b_i^{t-1}, \mu_j^t)}{Pr(o_i^t\rvert a_i^{t-1}, b_i^{t-1}, \mu_j^t)} \nonumber\\
&=\beta \sum_{is^{t-1}} b_i^{t-1}(is^{t-1})Pr(is^t,o_i^t\rvert a_i^{t-1}, is^{t-1}, \mu_j^t)\nonumber\\
&=\beta \sum_{is^{t-1}} b_i^{t-1}(is^{t-1}) \sum_{a_j^{t-1}} Pr (a_j^{t-1}\rvert a_i^{t-1}, is^{t-1}, \mu_j^t)  Pr(is^t, o_i^t\rvert a_i^{t-1}, a_j^{t-1}, is^{t-1}, \mu_j^t) \nonumber\\ 
&=\beta \sum_{is^{t-1}} b_i^{t-1}(is^{t-1}) \sum_{a_j^{t-1}} Pr (a_j^{t-1}\rvert is^{t-1}, \mu_j^t) Pr(is^t, o_i^t\rvert a_i^{t-1}, a_j^{t-1}, is^{t-1}, \mu_j^t) \nonumber\\ 
&=\beta \sum_{is^{t-1}} b_i^{t-1}(is^{t-1}) \sum_{a_j^{t-1}} Pr (a_j^{t-1}\rvert b_j^{t-1}, \mu_j^t) Pr(o_i^t\rvert a^{t-1}, is^{t}, is^{t-1}, \mu_j^t) Pr(is^t\rvert a^{t-1}, is^{t-1}, \mu_j^t) \nonumber\\ 
&\overset{(a)}{=}\beta \sum_{is^{t-1}} b_i^{t-1}(is^{t-1})Pr(o_i^t\rvert a^{t-1}, is^{t}) Pr(is^t\rvert a^{t-1}, is^{t-1}) \nonumber\\ 
&=\beta \sum_{is^{t-1}} b_i^{t-1}(is^{t-1}) O_i(s^t, a^{t-1}, o_i^t) Pr(is^t\rvert a^{t-1}, is^{t-1}), 
\end{flalign}
where the equality $(a)$ holds because $Pr(a_j^{t-1}\rvert b_j^{t-1}, \mu_j^t)=1$ when $\mu_j^t=a_j^{t-1}$ and $Pr(a_j^{t-1}\rvert b_j^{t-1}, \mu_j^t)=0$ otherwise. And recall $\mu_j^t = a_j^{t-1}$ is our assumption.

Since we assume the interactive states are intentional, $is^t =(s^t, \theta_j^t) = (s^t, b_j^t, \hat{\theta}^t_j)$, we can simplify the term $Pr(is^t\rvert a^{t-1}, is^{t-1})$.
\begin{flalign}\label{equ:is-update}
&Pr(is^t\rvert a^{t-1}, is^{t-1}) \nonumber\\
=& Pr(s^t, b_j^t, \hat{\theta}^t_j\rvert a^{t-1}, is^{t-1}) \nonumber\\
=& Pr(b_j^t\rvert s^t, \hat{\theta}^t_j , a^{t-1}, is^{t-1})Pr(s^t, \hat{\theta}^t_j\rvert a^{t-1}, is^{t-1})\nonumber\\
=& Pr(b_j^t\rvert s^t, \hat{\theta}^t_j , a^{t-1}, is^{t-1}) Pr(\hat{\theta}^t_j\rvert s^t, a^{t-1}, is^{t-1})Pr(s^t\rvert a^{t-1}, is^{t-1}) \nonumber\\
=& Pr(b_j^t\rvert s^t, \hat{\theta}^t_j , a^{t-1}, is^{t-1}) I(\hat{\theta}^t_j, \hat{\theta}^{t-1}_j) T_i(s^{t-1},a^{t-1},s^t),
\end{flalign}
where $I(\cdot, \cdot)$ is a boolean identity function, which equal $1$ if the two frames are identical, and $0$ otherwise. The joint action pair, $a^{t-1}$, may change the physical state. The third term on the right-hand side of Equation~(\ref{equ:is-update}) captures this transition.
\begin{flalign}\label{equ:bj-update}
&Pr(b_j^t\rvert s^t, \hat{\theta}^t_j , a^{t-1}, is^{t-1}) \nonumber \\
=& \sum_{o_j^t} Pr(b_j^t\rvert s^t, \hat{\theta}^t_j , a^{t-1}, is^{t-1}, o_j^t)Pr(o_j^t\rvert s^t, \hat{\theta}^t_j , a^{t-1}, is^{t-1}) \nonumber \\
=& \sum_{o_j^t} Pr(b_j^t\rvert s^t, \hat{\theta}^t_j , a^{t-1}, is^{t-1}, o_j^t)Pr(o_j^t\rvert s^t, \hat{\theta}^t_j , a^{t-1}) \nonumber \\
=& \sum_{o_j^t} \tau_{\theta_j^t}(b_j^{t-1}, a_j^{t-1}, o_j^t, b_j^t)O_j(s^t, a^{t-1}, o_j^t).
\end{flalign}
In Equation~(\ref{equ:bj-update}), the first term on the right-hand side is $1$ if agent $j$'s belief update, $SE_{\theta_j}(b_j^{t-1}, a_j^{t-1}, o_j^t)$ generates a belief state equal to $b_j^t$. In the second terms on the right-hand side of the equation, the MNO assumption allows us to replace $Pr(o_j^t\rvert s^t, \hat{\theta}^t_j , a^{t-1})$ with $O_j(s^t, a^{t-1}, o_j^t)$.

Let us substitute Equation~(\ref{equ:bj-update}) into Equation~(\ref{equ:is-update}),
\begin{flalign}\label{Equ:ISUpdate}
Pr(is^t\rvert a^{t-1}, is^{t-1}) 
= \sum_{o_j^t} \tau_{\theta_j^t}(b_j^{t-1}, a_j^{t-1}, o_j^t, b_j^t)O_j(s^t, a^{t-1}, o_j^t) I(\hat{\theta}^t_j, \hat{\theta}^{t-1}_j) T_i(s^{t-1},a^{t-1},s^t).
\end{flalign}

Now substitute Equation~(\ref{Equ:ISUpdate}) into Equation~(\ref{equ:belief-update}), we have
\begin{flalign}
b_i^t(is^t)=&\beta \sum_{is^{t-1}} b_i^{t-1}(is^{t-1}) \ O_i(s^t, a^{t-1}, o_i^t)  \sum_{o_j^t} \tau_{\theta_j^t}(b_j^{t-1}, a_j^{t-1}, o_j^t, b_j^t)O_j(s^t, a^{t-1}, o_j^t)\nonumber \\
&  I(\hat{\theta}^t_j, \hat{\theta}^{t-1}_j) T_i(s^{t-1},a^{t-1},s^t).
\end{flalign}

We can remove the term $I(\hat{\theta}^t_j, \hat{\theta}^{t-1}_j)$ by changing the scope of the first summation, which gives us the final expression for the belief update,
\begin{flalign}
b_i^t(is^t) =& \beta \sum_{is^{t-1}:\hat{m}_j^{t-1}=\hat{\theta}_j^t} b_i^{t-1}(is^{t-1}) O_i(s^t, a^{t-1}, o_i^t) T_i(s^{t-1}, a^{t-1}, s^t)\nonumber \\ 
&\sum_{o_j^t} \tau_{\theta_j^t}(b_j^{t-1}, a_j^{t-1}, o_j^t, b_j^t) O_j(s^t, a^{t-1}, o_j^t).
\end{flalign}
\end{proof}

Next, for an agent $i$ and its $\text{I-POMDP}_{i}$, following the proof idea in \cite{gmytrasiewicz2005framework}, we prove the convergence of our algorithm. First, we show some properties of the back up operator $H$,
\begin{lemma}\label{lemma:1}
For any finitely nested I-POMDP value functions $V$ and $U$, if $V \le U$ , then $HV \le HU$.
\end{lemma}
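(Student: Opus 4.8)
The plan is to exploit the structure of the backup operator. Recall that $HU(\theta_i,\mu_j) = \max_{a_i \in A_i} h(\theta_i, a_i, \mu_j, U)$, where $h(\theta_i,a_i,\mu_j,U)$ splits into an immediate-reward part $\sum_{is} b_i(is)\, ER_i(is, a_i)$, which does not involve the value function at all, plus a discounted look-ahead part $\gamma \sum_{o_i \in \Omega_i} Pr(o_i \mid a_i, b_i)\, U(\langle SE_{\theta_i}(b_i, a_i, o_i, \mu_j), \hat\theta_i\rangle)$. Monotonicity of $H$ will follow from monotonicity of $h$ in its last argument together with the elementary fact that a pointwise inequality between two functions is preserved under taking their maxima.

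First I would fix an arbitrary type $\theta_i$, an arbitrary message $\mu_j$, and an arbitrary action $a_i \in A_i$, and compare $h(\theta_i, a_i, \mu_j, V)$ with $h(\theta_i, a_i, \mu_j, U)$. The immediate-reward terms are identical, so they cancel, leaving
\begin{flalign}
h(\theta_i, a_i, \mu_j, U) - h(\theta_i, a_i, \mu_j, V) = \gamma \sum_{o_i \in \Omega_i} Pr(o_i \mid a_i, b_i)\Big( U(\langle SE_{\theta_i}(b_i, a_i, o_i, \mu_j), \hat\theta_i\rangle) - V(\langle SE_{\theta_i}(b_i, a_i, o_i, \mu_j), \hat\theta_i\rangle) \Big).
\end{flalign}
Since $0 \le \gamma \le 1$, each coefficient $Pr(o_i \mid a_i, b_i)$ is nonnegative, and the hypothesis $V \le U$ is a pointwise bound over the whole space $B$ of value functions — in particular it holds at each successor type $\langle SE_{\theta_i}(b_i, a_i, o_i, \mu_j), \hat\theta_i\rangle$ — every summand is nonnegative, hence $h(\theta_i, a_i, \mu_j, V) \le h(\theta_i, a_i, \mu_j, U)$. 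Finiteness of the nesting and of $\Omega_i$ guarantees the sum is a finite nonnegative combination, so no limiting argument is needed.

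Finally I would take the maximum over $a_i \in A_i$. Letting $a_i^\star \in A_i$ attain $HV(\theta_i,\mu_j) = \max_{a_i} h(\theta_i, a_i, \mu_j, V)$, the previous step gives $HV(\theta_i, \mu_j) = h(\theta_i, a_i^\star, \mu_j, V) \le h(\theta_i, a_i^\star, \mu_j, U) \le \max_{a_i \in A_i} h(\theta_i, a_i, \mu_j, U) = HU(\theta_i, \mu_j)$. Since $\theta_i$ and $\mu_j$ were arbitrary, this yields $HV \le HU$ as functions on $\Theta_i \times M$.

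I do not expect a genuine obstacle here; the only points that need care are bookkeeping ones. One must read the hypothesis $V \le U$ as a pointwise inequality on the space $B$ so that it can be invoked at the updated types produced by $SE_{\theta_i}$, and one must note that the reward term is the same for $V$ and $U$ so that it drops out of the comparison. If one wants to make the phrase \emph{finitely nested} fully explicit, one observes that $SE_{\theta_i}$ preserves the frame $\hat\theta_i$ and hence maps level-$l$ types to level-$l$ types, so the whole comparison stays within a single function space and the argument above is self-contained at each nesting level.
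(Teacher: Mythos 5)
Your proof is correct and follows essentially the same route as the paper's: both establish the inequality action-by-action (the reward term is unaffected, the look-ahead term is a nonnegative combination of values where $V \le U$ applies at the successor types) and then pass to the maximum via the maximizing action of $HV$. The only cosmetic difference is that you compare $V$ and $U$ at the same message $\mu_j$, which is the natural pointwise reading of $V \le U$, whereas the paper states its hypothesis and conclusion with two independent messages $\mu_V, \mu_U$; your version is, if anything, the cleaner bookkeeping.
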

\begin{proof}
Select arbitrary value functions $V$ and $U$ such that $V(\theta_{i}, \mu_V) \le U(\theta_{i}, \mu_U)$, $\forall \theta_{i} \in \Theta_{i}$, $\mu_V, \mu_U \in M$, where $\theta_{i}$ is an arbitrary type of agent $i$ and $\mu_V, \mu_U$ are arbitrary messages.
\begin{flalign}
&HV(\theta_{i}, \mu_V) \nonumber \\
=& \max_{a_i \in A_i} \bigg\{ \sum_{is} b_i(is) ER_i(is,a_i)+ \gamma \sum_{o_i \in \Omega_i} Pr(o_i\rvert a_i, b_i)V(\langle SE_{\theta_{i}}(b_i, a_i, o_i, \mu_V), \hat{\theta}_{i})\bigg\} \nonumber\\
& = \sum_{is} b_i(is) ER_i(is,a_i^*)+ \gamma \sum_{o_i \in \Omega_i} Pr(o_i\rvert a_i^*, b_i)V(\langle SE_{\theta_{i}}(b_i, a_i^*, o_i, \mu_V), \hat{\theta}_{i}) \nonumber\\
& \le  \sum_{is} b_i(is) ER_i(is,a_i^*)+ \gamma \sum_{o_i \in \Omega_i} Pr(o_i\rvert a_i^*, b_i)U(\langle SE_{\theta_{i}}(b_i, a_i^*, o_i, \mu_U), \hat{\theta}_{i}) \nonumber\\
& \le \max_{a_i \in A_i} \bigg\{ \sum_{is} b_i(is) ER_i(is,a_i)+ \gamma \sum_{o_i \in \Omega_i} Pr(o_i\rvert a_i, b_i)U(\langle SE_{\theta_{i}}(b_i, a_i, o_i, \mu_U), \hat{\theta}_{i})\bigg\} \nonumber\\
& = HU(\theta_{i}, \mu_U). 
\end{flalign}
Since $\theta_{i}, \mu_V, \mu_U $ are arbitrary, $HU \le HV$.
\end{proof}
\begin{lemma}\label{lemma:2}
For any finitely nested I-POMDP value functions $V,U$, and a discount factor $\gamma \in (0,1)$, $\lVert HV - HU \rVert \le \gamma \lVert V-U \rVert$.
\end{lemma}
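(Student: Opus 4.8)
The plan is to run the standard $\gamma$-contraction argument for dynamic-programming backup operators, adapted to the extra message argument, using the rewriting $HU(\theta_i,\mu_j)=\max_{a_i\in A_i} h(\theta_i,a_i,\mu_j,U)$ introduced just before the algorithm. Here $\lVert V-U\rVert$ is read as the supremum of $|V(\theta_i,\cdot)-U(\theta_i,\cdot)|$ over all (finitely nested) types $\theta_i\in\Theta_i$ (and messages), so the goal is to control $|HV(\theta_i,\mu_j)-HU(\theta_i,\mu_j)|$ uniformly. First I would fix an arbitrary type $\theta_i$ and message $\mu_j$ and, by the symmetry of the claim in $V$ and $U$, assume without loss of generality that $HV(\theta_i,\mu_j)\ge HU(\theta_i,\mu_j)$, so that the absolute value equals $HV(\theta_i,\mu_j)-HU(\theta_i,\mu_j)$. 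Let $a_i^\star\in A_i$ attain the maximum defining $HV(\theta_i,\mu_j)$.

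The key step is the usual ``max on both sides'' trick: since $HU(\theta_i,\mu_j)=\max_{a_i} h(\theta_i,a_i,\mu_j,U)\ge h(\theta_i,a_i^\star,\mu_j,U)$, we get
\[
HV(\theta_i,\mu_j)-HU(\theta_i,\mu_j)\ \le\ h(\theta_i,a_i^\star,\mu_j,V)-h(\theta_i,a_i^\star,\mu_j,U).
\]
Expanding $h$ from its definition, the immediate-reward terms $\sum_{is} b_i(is)\,ER_i(is,a_i^\star)$ are identical on both sides and cancel, leaving only
\[
\gamma\sum_{o_i\in\Omega_i} Pr(o_i\rvert a_i^\star,b_i)\Big[V\big(\langle SE_{\theta_i}(b_i,a_i^\star,o_i,\mu_j),\hat\theta_i\rangle\big)-U\big(\langle SE_{\theta_i}(b_i,a_i^\star,o_i,\mu_j),\hat\theta_i\rangle\big)\Big].
\]

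Then I would bound each bracketed difference by $\lVert V-U\rVert$; this is the one point deserving a sentence of justification: by Proposition~\ref{prop:SE-NMA-I-POMDP}, $SE_{\theta_i}$ produces a valid belief on $IS_i$, so $\langle SE_{\theta_i}(b_i,a_i^\star,o_i,\mu_j),\hat\theta_i\rangle$ is again a (finitely nested, one level shallower) type of agent $i$, hence lies in the domain over which the sup-norm is taken, and $H$ preserves this class. Since $Pr(\cdot\rvert a_i^\star,b_i)$ is a probability distribution on $\Omega_i$, its weights sum to one, so the displayed quantity is at most $\gamma\lVert V-U\rVert$. Combining with the symmetric case gives $|HV(\theta_i,\mu_j)-HU(\theta_i,\mu_j)|\le\gamma\lVert V-U\rVert$, and taking the supremum over $\theta_i$ and $\mu_j$ (which were arbitrary) yields $\lVert HV-HU\rVert\le\gamma\lVert V-U\rVert$.

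The argument is essentially routine; the main thing to get right is the bookkeeping around the ``$\max$ on both sides'' inequality together with the observation that the updated argument $\langle SE_{\theta_i}(\cdot),\hat\theta_i\rangle$ stays within the domain on which $\lVert V-U\rVert$ is measured, so the uniform bound legitimately applies under the discount. The cancellation of the reward term and the fact that $\sum_{o_i} Pr(o_i\rvert a_i^\star,b_i)=1$ are immediate from the definitions, so no real calculation is involved.
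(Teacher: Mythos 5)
Your proposal is correct and follows essentially the same route as the paper: the standard ``max on both sides'' contraction argument, with the reward terms cancelling and the bound $\gamma\lVert V-U\rVert$ following because $Pr(\cdot\mid a_i^*,b_i)$ sums to one. If anything, your bookkeeping is tidier than the paper's, which invokes the monotonicity lemma under an assumption $V\le U$ and compares $HV$ and $HU$ at different messages $\mu_V,\mu_U$, whereas your symmetric (WLOG) argument at a common $(\theta_i,\mu_j)$ needs neither and matches the natural reading of the sup norm.
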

\begin{proof}
Assume two arbitrary well defined value functions $V$ and $U$ such that $V \le U$. From Lemma~\ref{lemma:1}, it follows that $HV \le HU$. Let $\theta_{i}$ be an arbitrary type of agent $i$ and $\mu_V, \mu_U$ be arbitrary messages. And let $a_i^*$ be the optimal action of $HU(\theta_{i}, \mu_U)$, we have,
\begin{flalign}
0 &\le HV(\theta_{i}, \mu_V) - HU(\theta_{i}, \mu_U) \nonumber\\
& = \max_{a_i \in A_i} \bigg\{ \sum_{is} b_i(is) ER_i(is,a_i)+  \gamma \sum_{o_i \in \Omega_i} Pr(o_i\rvert a_i, b_i)V(\langle SE_{\theta_{i}}(b_i, a_i, o_i, \mu_V), \hat{\theta}_{i})\bigg\} \nonumber\\
& - \max_{a_i \in A_i} \bigg\{ \sum_{is} b_i(is) ER_i(is,a_i)+  \gamma \sum_{o_i \in \Omega_i} Pr(o_i\rvert a_i, b_i)U(\langle SE_{\theta_{i}}(b_i, a_i, o_i, \mu_U), \hat{\theta}_{i})\bigg\} \nonumber\\
& \le \sum_{is} b_i(is) ER_i(is,a_i^*)+ \gamma \sum_{o_i \in \Omega_i} Pr(o_i\rvert a_i^*, b_i)V(\langle SE_{\theta_{i}}(b_i, a_i^*, o_i, \mu_V), \hat{\theta}_{i}) \nonumber\\
& - \sum_{is} b_i(is) ER_i(is,a_i^*)- \gamma \sum_{o_i \in \Omega_i} Pr(o_i\rvert a_i^*, b_i)U(\langle SE_{\theta_{i}}(b_i, a_i^*, o_i, \mu_U), \hat{\theta}_{i}) \nonumber\\
& = \gamma \sum_{o_i \in \Omega_i} Pr(o_i\rvert a_i^*, b_i) \bigg\{ V(\langle SE_{\theta_{i}}(b_i, a_i^*, o_i, \mu_V), \hat{\theta}_{i}) - U(\langle SE_{\theta_{i}}(b_i, a_i^*, o_i, \mu_U), \hat{\theta}_{i}) \bigg\} \nonumber \\
& = \gamma \sum_{o_i \in \Omega_i} Pr(o_i\rvert a_i^*, b_i) \lVert V-U \rVert \nonumber \\
& = \gamma \lVert V-U \rVert.
\end{flalign}
As the supremum norm is symmetrical, a similar result can be derived for $HU(\theta_{i}, \mu_U) - HV(\theta_{i}, \mu_V)$. Since $\theta_{i}, \mu_V, \mu_U $ are arbitrary, we prove the lemma.
\end{proof}
Based on Lemma~\ref{lemma:1} and Lemma~\ref{lemma:2}, following the Contraction Mapping Theorem in \cite{stokey1989recursive}, we can prove for each agent $i$, the value iteration in its $\text{I-POMDP}_{i}$ converges to a unique fixed point. We state the Contraction Mapping Theorem \cite{stokey1989recursive} below
\begin{theorem}[Contraction Mapping Theorem\cite{stokey1989recursive}]\label{thm:CMT}
If $(S, \rho)$ is a complete metric space and $T: S \rightarrow S$ is a contraction mapping with modulus $\gamma$, then
\begin{itemize}
    \item[1.] $T$ has exactly one fixed point $U^*$ in $S$, and
    \item[2.] The sequence $\{U^n\}$ converges to $U^*$.
\end{itemize}
\end{theorem}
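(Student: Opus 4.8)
The plan is to prove the two assertions by the classical argument: build an explicit sequence of iterates, show it is Cauchy, use completeness to get a limit, and then check that the limit is the unique fixed point.

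First I would fix an arbitrary starting point $U^0 \in S$ and define the iterate sequence $U^{n+1} = T U^n$ for $n \ge 0$. The first step is to bound consecutive distances: by the contraction property, $\rho(U^{n+1}, U^n) = \rho(T U^n, T U^{n-1}) \le \gamma\,\rho(U^n, U^{n-1})$, so a one-line induction gives $\rho(U^{n+1}, U^n) \le \gamma^n \rho(U^1, U^0)$.

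Next I would show $\{U^n\}$ is Cauchy. For any $m > n$, the triangle inequality together with the bound above yields
\begin{flalign}
\rho(U^m, U^n) \le \sum_{k=n}^{m-1} \rho(U^{k+1}, U^k) \le \rho(U^1, U^0) \sum_{k=n}^{m-1} \gamma^k \le \frac{\gamma^n}{1-\gamma}\,\rho(U^1, U^0),
\end{flalign}
which tends to $0$ as $n \to \infty$ because $\gamma \in (0,1)$. Hence $\{U^n\}$ is Cauchy, and since $(S,\rho)$ is complete it converges to some $U^* \in S$. To see $U^*$ is a fixed point, note that a contraction is Lipschitz, hence continuous, so $T U^* = T(\lim_n U^n) = \lim_n T U^n = \lim_n U^{n+1} = U^*$; equivalently one can bound $\rho(T U^*, U^*) \le \rho(T U^*, U^{n+1}) + \rho(U^{n+1}, U^*) \le \gamma\,\rho(U^*, U^n) + \rho(U^{n+1}, U^*) \to 0$. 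Finally, for uniqueness, if $U^*$ and $W^*$ are both fixed points then $\rho(U^*, W^*) = \rho(T U^*, T W^*) \le \gamma\,\rho(U^*, W^*)$, and since $\gamma < 1$ this forces $\rho(U^*, W^*) = 0$, i.e.\ $U^* = W^*$. Because $U^0$ was arbitrary, every iterate sequence converges to this same $U^*$, which is claim 2.

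There is no substantive obstacle here — this is the standard Banach fixed-point argument applied with $S = B$ and $T = H$, and in our setting Lemma~\ref{lemma:2} is exactly the contraction hypothesis with modulus $\gamma$. The only place requiring a little care is the Cauchy estimate: the geometric series $\sum_k \gamma^k$ must be summed correctly to get the $\gamma^n/(1-\gamma)$ tail bound, and completeness of the underlying space of bounded value functions must be invoked, since otherwise the limit need not lie in $S$.
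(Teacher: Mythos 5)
Your argument is correct and is exactly the classical Banach fixed-point proof: iterate from an arbitrary $U^0$, use the contraction modulus $\gamma$ to get the geometric tail bound $\rho(U^m,U^n)\le \frac{\gamma^n}{1-\gamma}\rho(U^1,U^0)$, invoke completeness for the limit, continuity of $T$ for the fixed-point property, and the contraction inequality for uniqueness. The paper does not prove this theorem itself but cites it from \cite{stokey1989recursive}, whose proof is the same standard argument you give, so there is nothing to add beyond noting that your application to $S=B$, $T=H$ with Lemma~\ref{lemma:2} as the contraction hypothesis matches how the paper uses the result.
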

\begin{theorem}
For a networked multi-agent I-POMDP, Algorithm~\ref{alg:DecBP} converges if the value functions of all agents are well defined.
\end{theorem}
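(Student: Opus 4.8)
The plan is to recognize that the statement is essentially a corollary of the Contraction Mapping Theorem (Theorem~\ref{thm:CMT}) together with Lemmas~\ref{lemma:1} and~\ref{lemma:2}, applied agent by agent. First I would fix an arbitrary agent $i$ and let $B$ denote the space of bounded real-valued functions on $\Theta_i \times M$ equipped with the supremum norm $\lVert \cdot \rVert$; the hypothesis that the value functions of all agents are well defined is precisely what lets us work inside $B$ (indeed, since $ER_i$ is bounded and $\gamma < 1$, every value function produced by the recursion is bounded, essentially by $\sup|ER_i|/(1-\gamma)$). A standard argument then shows that $(B, \lVert\cdot\rVert)$ is a complete metric space, since a uniform limit of bounded functions is bounded.

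Next I would check that the backup operator $H$ maps $B$ into itself: if $U \in B$, then $HU(\theta_i, \mu_j) = \max_{a_i \in A_i} h(\theta_i, a_i, \mu_j, U)$ is finite and uniformly bounded, because $\sum_{o_i} Pr(o_i \mid a_i, b_i) \le 1$ and $U$ is bounded, giving $\lVert HU \rVert \le \sup|ER_i| + \gamma \lVert U \rVert$. With $H : B \to B$ established, Lemma~\ref{lemma:2} says exactly that $H$ is a contraction on $B$ with modulus $\gamma \in (0,1)$ --- and note that the supremum norm there ranges over both types and messages, so this is genuinely an operator statement on $B$ and the fact that the messages exchanged in the algorithm vary over time causes no difficulty. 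Lemma~\ref{lemma:1} (monotonicity) is used inside the proof of Lemma~\ref{lemma:2} and need not be invoked separately here.

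Applying Theorem~\ref{thm:CMT} to $H : B \to B$ then yields a unique fixed point $U_i^* \in B$ and the convergence $U^n = H U^{n-1} \to U_i^*$ from any initialization $U^0 \in B$. Finally I would connect this to Algorithm~\ref{alg:DecBP}: the line $U(\theta_i) \leftarrow HU(\theta_i)$ is one application of $H$, and because the operator $H$ for agent $i$ already absorbs the neighbors' messages through $SE_{\theta_i}(b_i, a_i, o_i, \mu_j)$ and through $ER_i(is, a_i) = \sum_{a_j} R_i(is, a_i, a_j) Pr(a_j \mid m_j, \mu_j)$, the value-function iterate of agent $i$ evolves exactly as the sequence $\{U^n\}$ above and hence converges to $U_i^*$. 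Since the agents update simultaneously and the argument applies to every $i \in [N]$, the algorithm converges.

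I expect the only real subtlety --- hence the main obstacle --- to be pinning down the function space cleanly: one must interpret ``well defined'' so that the value functions live in a complete normed space and must verify that $H$ preserves membership in that space. Once this is in place, the conclusion is a direct invocation of Lemma~\ref{lemma:2} and Theorem~\ref{thm:CMT}, and the decentralized, message-passing structure of the algorithm adds nothing beyond bookkeeping, since the coupling between agents is entirely encapsulated in each agent's own operator $H$.
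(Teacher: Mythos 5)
Your proposal is correct and follows essentially the same route as the paper: completeness of $(B,\lVert\cdot\rVert)$ under the supremum norm, the contraction property of $H$ from Lemma~\ref{lemma:2}, and a direct application of the Contraction Mapping Theorem~\ref{thm:CMT}. The extra bookkeeping you supply (verifying $H:B\to B$, the bound $\sup|ER_i|/(1-\gamma)$, and relating the iterate in Algorithm~\ref{alg:DecBP} to the sequence $U^n=HU^{n-1}$) only fleshes out steps the paper leaves implicit.
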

\begin{proof}
First, the normed space $(B, \lVert \cdot \rVert)$ is complete w.r.t. the metric induced by the supremum norm. Second, Lemma~\ref{lemma:2} proves the contraction property of the operator $H$. Directly applying Theorem~\ref{thm:CMT}, letting $T=H$, we prove the value iteration in I-POMDPs converges to a unique fixed point.
\end{proof}
And we naturally have the following theorem.
\begin{theorem}\label{thm:OP}
For a networked multi-agent I-POMDP, the optimal policies for agent $i$, $i \in [N]$ is given by Equation~(\ref{equ:OA-NMA}).
\end{theorem}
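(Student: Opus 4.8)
The plan is to deduce this from the conclusion of the preceding theorem — that the backup operator $H$ has a unique fixed point $U^{*}$ in $(B,\lVert\cdot\rVert)$ and that the value iterates $U^{n}=HU^{n-1}$ converge to $U^{*}$ from any initialization — together with a standard policy-evaluation / greedy-policy argument adapted to the type-and-message representation. First I would make precise what a stationary policy of agent $i$ is: a map $\pi_{i}:\Theta_{i}\times M\to A_{i}$ assigning an action to each type--message pair, and, for a fixed $\pi_{i}$, I would introduce the policy-evaluation operator $H^{\pi_{i}}:B\to B$ by $H^{\pi_{i}}U(\theta_{i},\mu_{j})=h(\theta_{i},\pi_{i}(\theta_{i},\mu_{j}),\mu_{j},U)$. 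Repeating the computation of Lemma~\ref{lemma:2} verbatim, with the outer $\max_{a_{i}}$ replaced by the fixed choice $\pi_{i}(\theta_{i},\mu_{j})$, shows $H^{\pi_{i}}$ is also a $\gamma$-contraction, so by Theorem~\ref{thm:CMT} it has a unique fixed point $U^{\pi_{i}}$. The one step that is not purely formal is identifying $U^{\pi_{i}}$ with the expected discounted return $E\big[\sum_{t\ge 0}\gamma^{t}r_{i}^{t}\big]$ obtained by following $\pi_{i}$ forever under the belief dynamics $SE_{\theta_{i}}$; this is the routine unrolling of the Bellman recursion, which converges because $\gamma<1$.

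Next I would record the pointwise identity $HU=\max_{\pi_{i}}H^{\pi_{i}}U$ for every $U\in B$, which is immediate from $HU(\theta_{i},\mu_{j})=\max_{a_{i}\in A_{i}}h(\theta_{i},a_{i},\mu_{j},U)$ and the finiteness of $A_{i}$ (so the maximum is attained and the argmax set in Equation~(\ref{equ:OA-NMA}) is nonempty). Combining this with the monotonicity of $H$ (Lemma~\ref{lemma:1}), which holds for each $H^{\pi_{i}}$ by the same argument, gives the first half of optimality: for any policy $\pi_{i}$ we have $U^{*}=HU^{*}\ge H^{\pi_{i}}U^{*}$, and iterating $H^{\pi_{i}}$ while preserving the inequality by monotonicity yields $U^{*}\ge (H^{\pi_{i}})^{n}U^{*}\to U^{\pi_{i}}$, so $U^{*}$ dominates the value of every stationary policy.

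For the second half, let $\pi_{i}^{*}$ be any policy with $\pi_{i}^{*}(\theta_{i},\mu_{j})\in OPT(\theta_{i},\mu_{j})$ for every $(\theta_{i},\mu_{j})$. Then $H^{\pi_{i}^{*}}U^{*}=HU^{*}=U^{*}$, so $U^{*}$ is the unique fixed point of $H^{\pi_{i}^{*}}$, i.e.\ $U^{\pi_{i}^{*}}=U^{*}$; hence $\pi_{i}^{*}$ attains the supremum over all policies, is optimal, and its value equals the fixed point $U$ of Equation~(\ref{equ:VI-NMA}). Conversely, if $\pi_{i}$ is optimal then $U^{\pi_{i}}=U^{*}$, so $U^{*}=H^{\pi_{i}}U^{\pi_{i}}=H^{\pi_{i}}U^{*}\le HU^{*}=U^{*}$; equality throughout forces $\pi_{i}(\theta_{i},\mu_{j})$ to achieve the maximum in Equation~(\ref{equ:OA-NMA}) at every $(\theta_{i},\mu_{j})$, i.e.\ $\pi_{i}(\theta_{i},\mu_{j})\in OPT(\theta_{i},\mu_{j})$. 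Thus the optimal policies are exactly those selecting actions according to Equation~(\ref{equ:OA-NMA}).

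I expect the only genuine obstacle to be the bookkeeping around the message variable $\mu_{j}$: one must fix a convention under which messages received from neighbors are treated consistently — either as an exogenous component of agent $i$'s environment or as generated by the neighbors' own fixed policies — so that $H$, the operators $H^{\pi_{i}}$, and the expected-return interpretation of their fixed points all refer to the same underlying stochastic process. Once that convention is pinned down, the statement reduces to the classical contraction-plus-greedy argument for POMDPs and I-POMDPs, and nothing beyond Lemmas~\ref{lemma:1}--\ref{lemma:2} and Theorem~\ref{thm:CMT} is needed.
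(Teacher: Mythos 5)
Your proposal is correct as a piece of dynamic programming, but it is worth knowing that the paper itself supplies no argument at all for Theorem~\ref{thm:OP}: it is stated with the phrase ``and we naturally have the following theorem,'' i.e.\ it is treated as an immediate corollary of the convergence theorem, since $OPT(\theta_i,\mu_j)$ in Equation~(\ref{equ:OA-NMA}) is by construction the argmax of the converged Bellman equation~(\ref{equ:VI-NMA}). What you do differently is to make that leap rigorous in the classical Bellman-optimality style: you introduce stationary policies $\pi_i:\Theta_i\times M\to A_i$ and policy-evaluation operators $H^{\pi_i}$, reuse the Lemma~\ref{lemma:2} computation to get a $\gamma$-contraction for each $H^{\pi_i}$, use monotonicity (Lemma~\ref{lemma:1}) together with $HU=\max_{\pi_i}H^{\pi_i}U$ to show $U^*\ge U^{\pi_i}$ for every policy, and then verify that greedy policies attain $U^*$ and, conversely, that every optimal policy is greedy. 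This buys a genuine characterization of the set of optimal policies and an explicit link between the fixed point and the expected discounted return, neither of which the paper establishes. The price is that your argument needs ingredients the paper never pins down: that $A_i$ is finite (so the argmax is attained), a definition of the return of a fixed policy as a stochastic process --- which, as you correctly flag, requires fixing how the message stream $\mu_j^t$ and the neighbors' behavior are modeled (exogenous, or generated by the neighbors' models via $Pr(a_j\mid m_j,\mu_j)$) so that $H$, $H^{\pi_i}$ and the unrolled return all refer to the same process --- and a cleaner statement of Lemmas~\ref{lemma:1}--\ref{lemma:2} than the paper gives, since comparing $V(\theta_i,\mu_V)$ with $U(\theta_i,\mu_U)$ at \emph{different} messages, as the paper writes it, does not literally support the supremum-norm step; your argument only needs the same-message version, which is the one that actually holds. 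With those conventions fixed, your proof is sound and strictly more informative than the paper's treatment.
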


\section{Applications}\label{sec:app}
In this section, we show our networked multi-agent I-POMDPs framework can be applied to various applications.
\subsection{Decentralized Control Problem}
Let us consider the partial history sharing information model in \cite{nayyar2013decentralized}. Consider a dynamic system with $N$ controllers. The system operates in discrete time for a horizon $T$. Let $X^{\text{($t$)}} \in \mathcal{X}^{\text{($t$)}}$ denote the state of the system at time $t$, $U^{\text{($t$)}}_i \in \mathcal{U}^{\text{($t$)}}_i$ denote the control action of controller $i$, $i \in [N]$ at time $t$, and $\mathbf{U}^{\text{($t$)}}$ denote the vector $(U^{\text{($t$)}}_1, \ldots, U^{\text{($t$)}}_N)$. The initial state $X^{\text{($1$)}}$ has a probability distribution $Q^{\text{($1$)}}$ and evolves according to
\begin{flalign}
X^{\text{($t$+$1$)}} = f^{\text{($t$)}}(X^{\text{($t$)}}, \mathbf{U}^{\text{($t$)}}, W^{\text{($t$)}}_0),
\end{flalign}
where $\{W^{\text{($t$)}}_0\}_{t=1}^{T}$ is a sequence of i.i.d. random variables with probability distribution $Q_{W,0}$.

At any time $t$, each controller has access to three types of data: current observation, local memory, and shared memory.
\begin{itemize}
    \item Current local observation: Each controller makes a local observation $Y^{\text{($t$)}}_i \in \mathcal{Y}^{\text{($t$)}}_i$ on the state of the system at time $t$,
    \begin{flalign}
        Y^{\text{($t$)}}_i = h^{\text{($t$)}}_i(X^{\text{($t$)}}, W^{\text{($t$)}}_i),
    \end{flalign}
    where $\{W^{\text{($t$)}}_i\}_{t=1}^{T}$ is a sequence of i.i.d. random variables with probability distribution $Q_{W,i}$. We assume that the random variables in the collection $\{X^{\text{($1$)}}, W^{\text{($t$)}}_j, t=1,\ldots, T, j=0,1,\ldots, N\}$ are mutually independent.
    \item Local memory: Each controller stores a subset $M^{\text{($t$)}}_i$ of its past local observations and its past actions in a local memory:
    \begin{flalign}
        M^{\text{($t$)}}_i \subset \{Y^{\text{(1:$t$)}}_i, U^{\text{(1:$t$)}}_i\}.
    \end{flalign}
    At $t=1$, the local memory is empty, $M^{\text{($t$)}}_1 = \emptyset$.
    \item Shared memory: In addition to its local memory, each controller has access to a shared memory. The contents $C_t$ of the shared memory at time $t$ are a subset of the past local observations and control actions of all controllers:
    \begin{flalign}
        C^{\text{($t$)}} \subset \{\mathbf{Y}^{\text{(1:$t$)}}, \mathbf{U}^{\text{(1:$t$)}}_i\},
    \end{flalign}
    where $\mathbf{Y}^{\text{($t$)}}$ and $\mathbf{U}^{\text{($t$)}}$ denote the vectors $(Y^{\text{($t$)}}_1, \ldots, Y^{\text{($t$)}}_N)$ and $(U^{\text{($t$)}}_1, \ldots, U^{\text{($t$)}}_N)$ respectively. At $t=1$, the shared memory is empty, $C^{\text{($1$)}} = \emptyset$.
\end{itemize}

Controller $i$ chooses action $U^{\text{($t$)}}_i$ as a function of the total data $Y^{\text{($t$)}}_i, M^{\text{($t$)}}_i, C^{\text{($t$)}}$ available to it. Specifically, for every controller $i$, $i\in [N]$,
\begin{flalign}
    U^{\text{($t$)}}_i = g^{\text{($t$)}}_i(Y^{\text{($t$)}}_i, M^{\text{($t$)}}_i, C^{\text{($t$)}}),
\end{flalign}
where $g^{\text{($t$)}}_i$ is called the control law of controller $i$. The collection $\mathbf{g}_i=(g^{\text{($1$)}}_i,\ldots, g^{\text{($T$)}}_i)$ is called the control strategy of controller $i$. The collection $\mathbf{g}_{1:N}=(\mathbf{g}_1,\ldots, \mathbf{g}_N)$ is called the control strategy of the system.

At time $t$, the system incurs a cost $l(X^{\text{($t$)}}, \mathbf{U}^{\text{($t$)}})$. The performance of the control strategy of the system is measured by the expected total cost 
\begin{flalign}\label{equ:DCP-cost}
    J(\mathbf{g}_{1:N}):= \mathbb{E}^{\mathbf{g}_{1:N}} [\sum_{t=1}^T l(X^{\text{($t$)}}, \mathbf{U}^{\text{($t$)}})],
\end{flalign}
where the expectation is with respect to the joint probability measure on $(X^{\text{(1:$T$)}}, \mathbf{U}^{\text{(1:$T$)}})$ induced by the choice of $\mathbf{g}_{1:N}$.

We are interested in the following optimization problem
\begin{definition}\label{def:DCP}
For the model described above, given the state evolution functions $f^{\text{($t$)}}$, the observation functions $h^{\text{($t$)}}_i$, the protocols for updating local and share memory, the cost function $l$, the distributions $Q^{\text{($1$)}}$, $Q_{W,i}$, $i = 0,1,\ldots, N$, and the horizon $T$, find a control strategy $\mathbf{g}_{1:N}$ for the system that minimized the expected total cost given by Equation~(\ref{equ:DCP-cost}).
\end{definition}

\cite{nayyar2013decentralized} show the decentralized system defined in Definition~\ref{def:DCP} can be viewed as a coordinated system. The coordinator only knows the shared memory $C^{\text{($t$)}}$ at time $t$. At time $t$, the coordinator chooses mappings $\Gamma^{\text{($t$)}}_i: \mathcal{Y}^{\text{($t$)}}_i\times \mathcal{M}^{\text{($t$)}}_i \rightarrow \mathcal{U}^{\text{($t$)}}_i$, for $i \in [n]$, according to 
\begin{flalign}
\mathbf{\Gamma}^{\text{($t$)}} = d^{\text{($t$)}}(C^{\text{($t$)}}, \mathbf{\Gamma}^{\text{(1:$t$-1)}}),
\end{flalign}
where $\mathbf{\Gamma}^{\text{($t$)}} = (\Gamma^{\text{($t$)}}_1, \Gamma^{\text{($t$)}}_2, \ldots, \Gamma^{\text{($t$)}}_n)$, and the function $d^{\text{($t$)}}$ is called \emph{coordination rule} at time $t$. The function $\Gamma^{\text{($t$)}}_i$ is called the \emph{coordinator’s prescription} to controller $i$. At time $t$, the function $\Gamma^{\text{($t$)}}_i$ is communicated to controller $i$, and then the controller $i$ generates an action using the function $\Gamma^{\text{($t$)}}_i$ based on its current local observation and its local memory:
\begin{flalign}
U^{\text{($t$)}}_i = \Gamma^{\text{($t$)}}_i(Y^{\text{($t$)}}_i, M^{\text{($t$)}}_i).
\end{flalign}

Moreover, the coordinated system can further be viewed as an instance of a POMDP model by defining the state process as $s^{t} :=\{X^{\text{($t$)}}, \mathbf{Y}^{\text{($t$)}}, \mathbf{M}^{\text{($t$)}} \}$, the observation process as $o^t:=C^{\text{($t$-1)}}$, and the action process $A^t := \mathbf{\Gamma}^{\text{($t$)}}$. And we can define the information state at time $t$ for the POMDP of the coordinator as:
\begin{flalign}
\Pi^{\text{($t$)}} := \mathbb{P}(s^{\text{($t$)}} \lvert C^{\text{($t$)}}, \mathbf{\Gamma}^{\text{(1:$t$)}}).
\end{flalign}

Furthermore, we have a new system dynamic at time $t$ as
\begin{flalign}
\Pi^{\text{($t$+1)}} = \eta^{\text{($t$)}}(\Pi^{\text{($t$)}}, C^{\text{($t$)}}, \mathbf{\Gamma}^{\text{($t$)}}),
\end{flalign}
where $\eta^{\text{($t$)}}$ is the standard non-linear filtering update function (see \cite{nayyar2013decentralized} for more details).

Now, given our framework in Section~\ref{sec:def}, we can prove that the above optimization problem is a networked multi-agent I-POMDP, as shown in the following proposition.
\begin{proposition}
The optimization problem defined in Definition~\ref{def:DCP} is a networked multi-agent I-POMDP.
\end{proposition}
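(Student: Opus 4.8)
The plan is to exhibit an explicit identification of the data of Definition~\ref{def:DCP} with the components of the tuple in Definition~\ref{NMA-I-POMDP}, and then check that the belief update, value iteration, and optimal-action equations of Section~\ref{sec:def} collapse onto the dynamic program that \cite{nayyar2013decentralized} derive for the coordinated POMDP. I would build on the reformulation already recalled above: the coordinated system is a POMDP with underlying state $s^t = \{X^{(t)}, \mathbf{Y}^{(t)}, \mathbf{M}^{(t)}\}$, observation $o^t = C^{(t-1)}$, action $A^t = \mathbf{\Gamma}^{(t)}$, information state $\Pi^{(t)}$, and filtering dynamics $\eta^{(t)}$.

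First I would fix the communication network $\mathcal{G}$ to be the complete graph on the $N$ controllers, so that it is connected and $\partial i = [N]\setminus\{i\}$ for every $i$; this reflects that every controller has access to the same shared memory $C^{(t)}$. I would then set $S$ to be the state space of $s^t$ above, $\Omega_i$ to be the space of shared-memory contents (so that $o_i^t = C^{(t-1)}$ is a common observation for all agents), $A_i$ to be controller $i$'s prescription space $\{\Gamma_i : \mathcal{Y}_i^{(t)}\times\mathcal{M}_i^{(t)}\to\mathcal{U}_i^{(t)}\}$ (so that the joint action $(a_i,a_{\partial i})$ recovers the coordinator's prescription $\mathbf{\Gamma}^{(t)}$), and $R_i(is,a_i,a_{\partial i}) := -\,l\big(X^{(t)},\mathbf{\Gamma}^{(t)}(\mathbf{Y}^{(t)},\mathbf{M}^{(t)})\big)$, read off from the state and joint action. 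The transition kernel $T_i: S\times A_i\times A_{\partial i}\times S\to[0,1]$ is the one induced by the joint-control map $\mathbf{U}^{(t)}=\mathbf{\Gamma}^{(t)}(\mathbf{Y}^{(t)},\mathbf{M}^{(t)})$, the state update $f^{(t)}$, the observation maps $h_j^{(t)}$, the memory-update protocols, and the noise laws $Q_{W,j}$; it satisfies MNM because a prescription does not alter any other controller's prescription-selection rule. The observation kernel $O_i: S\times A_i\times A_{\partial i}\times\Omega_i\to[0,1]$ is the one induced by the shared-memory update protocol, and it satisfies MNO. Finally I would take the message type $M$ to be `belief state', with agent $i$ broadcasting its information state $\Pi^{(t)}$; because every agent sees the same shared memory, every agent computes the same $\Pi^{(t)}$, so the broadcast messages are mutually consistent and each agent's interactive belief over $IS_i = S\times M_{\partial i}$ concentrates on neighbor types carrying exactly this common belief.

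With these identifications in place, I would verify that Proposition~\ref{prop:SE-NMA-I-POMDP} (the `belief state' case) reduces to the filtering recursion $\Pi^{(t+1)} = \eta^{(t)}(\Pi^{(t)}, C^{(t)}, \mathbf{\Gamma}^{(t)})$, and that the value-iteration and optimal-policy equations~(\ref{equ:VI-NMA})--(\ref{equ:OA-NMA}) reduce to the coordinator's dynamic program; since the coordinator's optimal prescription, once decomposed coordinate-wise and applied through $U_i^{(t)} = \Gamma_i^{(t)}(Y_i^{(t)},M_i^{(t)})$, yields an admissible strategy $\mathbf{g}_{1:N}$, and the globally averaged networked return equals $-J(\mathbf{g}_{1:N})$, a solution of the networked multi-agent I-POMDP is exactly a solution of Definition~\ref{def:DCP}, and conversely. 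This establishes that the optimization problem is an instance of the framework.

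The main obstacle I anticipate is the reconciliation between the \emph{single} coordinator of \cite{nayyar2013decentralized} and the $N$ agents of Definition~\ref{NMA-I-POMDP}: one must argue that when each agent separately runs its interactive belief update and value iteration on its own $\text{I-POMDP}_i$, the prescriptions they independently compute are mutually consistent and jointly equal the coordinator's optimal $\mathbf{\Gamma}^{(t)}$ rather than conflicting. This is where the common-information structure does the work — all agents share $C^{(t)}$, hence the same $\Pi^{(t)}$, hence the same dynamic program — but it must be stated carefully, including the consistency (fixed-point) condition on the nested intentional models so that each agent's model of a neighbor matches that neighbor's actual prescription rule. A secondary technical point is that the Nayyar model is a finite-horizon, time-varying problem whereas Definition~\ref{NMA-I-POMDP} is phrased in stationary terms; this is handled either by augmenting $S$ with the time index $t$ (making the kernels stationary) or by observing that the construction, the belief update, and the backup operator $H$ all carry over verbatim to the non-stationary finite-horizon setting.
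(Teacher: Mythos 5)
Your proposal is correct in spirit and follows the same basic strategy as the paper -- proving the claim by exhibiting an explicit instantiation of the tuple in Definition~\ref{NMA-I-POMDP} -- but several of your concrete identifications differ from the paper's, and your plan goes further than the paper's proof actually does. The paper keeps the physical state space small ($IS_i = \mathcal{X}\times M_{\partial i}$, i.e.\ $S=\mathcal{X}$ with neighbors' models carried in the interactive state), takes the action space to be the range of the prescription $\Gamma^{(t)}_i$, takes $\Omega_i$ to be the possible shared-memory contents with $O_i$ induced by $h^{(t)}_i$ and $\Gamma^{(t)}_i$, takes the transition model from the filtering dynamics $\eta^{(t)}$, and -- crucially -- chooses the message type to be `action' or `observation', so that the network messages are precisely the mechanism by which the shared memory $C^{(t)}$ is realized; the graph is defined by who shares memory with whom. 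You instead absorb $\mathbf{Y}^{(t)},\mathbf{M}^{(t)}$ into $S$, use prescription-valued actions, fix the complete graph, and choose message type `belief state' with every agent broadcasting the common information state $\Pi^{(t)}$. Your route leans more heavily on the common-information/coordinator reduction, which makes your planned verification (that the `belief state' case of Proposition~\ref{prop:SE-NMA-I-POMDP} collapses to $\Pi^{(t+1)}=\eta^{(t)}(\Pi^{(t)},C^{(t)},\mathbf{\Gamma}^{(t)})$ and that Equations~(\ref{equ:VI-NMA})--(\ref{equ:OA-NMA}) collapse to the coordinator's dynamic program) cleaner, and you rightly flag the finite-horizon/time-varying issue, which the paper silently ignores. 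The trade-off is that in your embedding the exchanged messages are informationally redundant (everyone already computes the same $\Pi^{(t)}$ from $C^{(t)}$), so the network plays no essential role, whereas the paper's `action'/`observation' messages model how the shared memory is actually populated over the network -- arguably the more faithful reduction. Also note that your anticipated obstacle (reconciling the single coordinator with $N$ independently optimizing agents, and the consistency of nested models) is left as a plan rather than resolved; to be fair, the paper's own proof does not address it either -- it is a bare identification of components and defers all solution-equivalence claims to the subsequent corollary and Theorem~\ref{thm:OP} -- so your proposal is, if anything, more demanding of itself than the published argument, but as written both stop short of a full verification of that point.
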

\begin{proof}
We can prove the proposition by defining the networked multi-agent I-POMDP tuple $\langle \mathcal{G}, \{IS_i\}_{i\in [N]},\{A_i\}_{i \in [N]}, \{T_i\}_{i \in [N]}, \{\Omega_i\}_{i \in [N]}, \{O_i\}_{i \in [N]}$, $\{R_i\}_{i \in [N]},M \rangle$ for the optimization problem,
\begin{itemize}
\item Given the definition of shared memory, we can define an equivalent communication network $\mathcal{G}=(\mathcal{V}, \mathcal{E})$. Let the set of $N$ agents (controllers) be the set of nodes $\mathcal{V}$, labeled by index $i = 1,2,...,N$. And an edge $(i,j)$ is in $\mathcal{E}$ if and only if agent $i$ shares memory with agent $j$.
\item For each agent (controller) $i$, the interactive state $IS_i=\mathcal{X} \times M_{ \partial i}$, where $\mathcal{X}$ is the set of states of the physical environment, and $M_{ \partial i}$ is the set of possible models of $i$'s neighbors $ \partial i$.
\item For each agent (controller) $i$, the action space $A^{\text{($t$)}}_i$ at time $t$ is given by the range of the coordinator’s prescription to controller $i$, $\Gamma^{\text{($t$)}}_i$.
\item For each agent (controller) $i$, the transition model $T_i^{\text{($t$)}}$ at time $t$ is given by the dynamic $\Pi^{\text{($t$+1)}} = \eta^{\text{($t$)}}(\Pi^{\text{($t$)}}, C^{\text{($t$)}}, \mathbf{\Gamma}^{\text{(1:$t$-1)}})$.
\item For each agent (controller) $i$, the set of observations $\Omega_i^{\text{($t$)}}$ at time $t$ is given by the set of possible shared memories $\{C^{\text{($t$)}}\}$.
\item For each agent (controller) $i$, the observation function $O_i^{\text{($t$)}}$ at time $t$ is given by $Y^{\text{($t$)}}_i = h^{\text{($t$)}}_i(X^{\text{($t$)}}, W^{\text{($t$)}}_i)$ and $\Gamma^{\text{($t$)}}_i$.
\item For each agent (controller) $i$, the reward function at time $t$ is given by $l(X^{\text{($t$)}}, \mathbf{U}^{\text{($t$)}})$.
\item The message type depends on the definition of shared memory, it could be `action' or `observation'.
\end{itemize}
\end{proof}
\begin{corollary}
The optimal strategy of the optimization problem in Definition~\ref{def:DCP} can be obtained by running Algorithm~\ref{alg:DecBP} and given by Theorem~\ref{thm:OP}.
\end{corollary}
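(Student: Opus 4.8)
The plan is to chain the three results that immediately precede the corollary. First I would invoke the Proposition just proved, which exhibits an explicit tuple $\langle \mathcal{G}, \{IS_i\}, \{A_i\}, \{T_i\}, \{\Omega_i\}, \{O_i\}, \{R_i\}, M \rangle$ realizing the decentralized control problem of Definition~\ref{def:DCP} as a networked multi-agent I-POMDP; here the reward of agent $i$ is taken to be $R_i = -l$, so that maximizing the discounted return in the I-POMDP coincides with minimizing the expected total cost $J(\mathbf{g}_{1:N})$ of Equation~(\ref{equ:DCP-cost}), and the message type $M$ is the one (`action' or `observation') dictated by the shared-memory protocol. Once this identification is in place, Algorithm~\ref{alg:DecBP} applies to the problem verbatim.

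Second, I would argue that running Algorithm~\ref{alg:DecBP} on this instance produces the optimal value function. Each iteration of the algorithm consists of the belief update $b_i^t = SE_{\theta_i}(b_i^{t-1}, a_i^{t-1}, o_i^t, \{\mu_j^t\}_{j\in\partial i})$, which is exactly the update validated in Proposition~\ref{prop:SE-NMA-I-POMDP} for the relevant message type, together with the value backup $U(\theta_i) \leftarrow HU(\theta_i)$, i.e.\ one application of the operator $H$ of Equation~(\ref{equ:VI-NMA}). By Lemma~\ref{lemma:2}, $H$ is a $\gamma$-contraction on the complete normed space $(B, \lVert\cdot\rVert)$, so by the Contraction Mapping Theorem (Theorem~\ref{thm:CMT}) the sequence $\{U^n\}$ generated by the algorithm converges to the unique fixed point $U^*$ of $H$; this is precisely the convergence statement already established, so the algorithm terminates at $U^*$.

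Third, I would read off the optimal strategy. At the fixed point, Theorem~\ref{thm:OP} states that the optimal policy of agent $i$ is the greedy policy with respect to $U^*$, namely any selection from $OPT(\theta_i, \mu_j)$ as in Equation~(\ref{equ:OA-NMA}). Translating this back through the reduction of \cite{nayyar2013decentralized}, the I-POMDP action of agent $i$ is the coordinator's prescription $\Gamma_i^{(t)}$, and composing $\Gamma_i^{(t)}$ with controller $i$'s current observation and local memory via $U_i^{(t)} = \Gamma_i^{(t)}(Y_i^{(t)}, M_i^{(t)})$ recovers a control law $g_i^{(t)}$; the resulting collection $\mathbf{g}_{1:N}$ attains the minimum in Definition~\ref{def:DCP}, which is the assertion of the corollary.

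The main obstacle I anticipate is not any individual step but the fidelity of the reduction at its seams: the decentralized control problem is finite-horizon with a cost to be minimized, whereas the I-POMDP machinery of Section~\ref{sec:def} is phrased for the infinite-horizon discounted criterion, so the translation needs either a passage to a (trivially convergent) finite-horizon value iteration or an argument that the infinite-horizon discounted problem with $R_i = -l$ shares its optimizers on the relevant horizon; and one must check that the shared-memory contents genuinely form a message of a type in $\mathcal{M}$, so that Proposition~\ref{prop:SE-NMA-I-POMDP} applies without modification. Once these compatibility checks are discharged, the corollary follows immediately from the cited results.
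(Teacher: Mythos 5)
Your proposal is correct and follows essentially the same route the paper intends: the paper gives no explicit proof of this corollary, treating it as an immediate consequence of the preceding Proposition (the reduction to a networked multi-agent I-POMDP tuple), the contraction-based convergence theorem for Algorithm~\ref{alg:DecBP}, and the policy characterization of Theorem~\ref{thm:OP} via Equation~(\ref{equ:OA-NMA}), which is exactly the chain you spell out. Your flagged compatibility checks (cost minimization via $R_i=-l$, finite versus infinite discounted horizon, and that the shared-memory contents are a valid message type) are reasonable caveats that the paper itself glosses over, but they do not change the argument's structure.
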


\cite{nayyar2013decentralized} call their solution Dynamic Programming Decomposition. We generalize the scenario to the networked multi-agent case, and may call our solution Belief Propagation Decomposition.
\subsection{Decentralized Spectrum Sharing Problem}\label{subsec:SS}
We consider a contention based decentralized spectrum sharing problem. Given a communication network $\mathcal{G}=(\mathcal{V}, \mathcal{E})$, such that $\mathcal{V}$ is a set of $N$ base stations, labeled by an index $i=1,2,\ldots, N$, and an edge $(i,j)$ is in $\mathcal{E}$ if and only if base station $i$ is backhauled with base station $j$. Each base station serves a given subset of user equipments (UEs). The whole communication network shares a single spectrum, the base stations contend the transmission opportunities in the following way. Each time slot consists of two phases, contention phase and data transmission phase. At contention phase, each base station draws a random number at the start of a time slot, which determines the order of optional transmissions. In the designated slot of the contention phase, a base station can choose to transmit or keep silent. If a base station transmits, it continues transmission through the contention phase and data transmission phase. Ideally, the UE throughput is given by Shannon channel capacity. And the objective of each base station is to maximize the long-term throughput it delivers to its UEs.

For simplicity, we make the following assumptions. We assume each base station serves only one UE. Each base station always has traffic to be delivered to the UE, thus always participates in contention. And there is only downlink traffic. The action space of each base station is $\{\text{transmit}, \text{silent}\}$, which can be denoted as $\{1,0\}$.

Given the above assumptions, we can mathematically formulate the problem. Let us denote the UE served by the base station $i$ by the same index $i$, and so is the link between the UE and the base station. We denote the link strength between base station $i$ and UE $j$ by channel coefficient $h_{ij}$. For each link $i$ in time slot $t$, let us denote the transmission rate by $R_i^{\text{($t$)}}$ and the long-term average rate by $\bar{X}_i^{\text{($t$)}}$.
\begin{flalign}
R_i^{\text{($t$)}} &= W \log_2(1+\text{SINR}_i^{\text{($t$)}}),\\
\bar{X}_i^{\text{($t$)}} &= (1-\frac{1}{B}) \bar{X}_i^{\text{($t$-1)}} + \frac{1}{B} R_i^{\text{($t$)}},
\end{flalign}
where $B > 1$ is a parameter which balances the weights of past and current transmission rates. We denote the actions of all base stations in time slot $t$ as $\mathbf{a}^{\text{($t$)}}=[a_1^{\text{($t$)}}, \ldots, a_N^{\text{($t$)}}]^\intercal \in \{0,1\}^N$. The SINR for UE $i$ is given by
\begin{flalign}
\text{SINR}_i^{\text{($t$)}} = \frac{h_{ii}^{\text{($t$)}}P_t a_i^{\text{($t$)}}}{\sigma^2_{\text{UE}} + \sum_{j\neq i}h_{ji}^{\text{($t$)}}P_t a_j^{\text{($t$)}}}=\frac{S_i^{\text{($t$)}}}{\sigma^2_{\text{UE}} + I_i^{\text{($t$)}}},
\end{flalign}
where $P_t$ is the transmission power, and $\sigma^2_{\text{UE}}$ is the noise power at UE. $S_i^{\text{($t$)}}$ is the signal power for UE $i$ at time $t$, and  $I_i^{\text{($t$)}}$ is the total interference power for UE $i$ at time $t$.

Given the action vector $\mathbf{a}^{\text{($t$)}}$ in each time slot $t$, the long term proportional fairness scheduling utility is \cite{kelly1998rate}
\begin{flalign}
\max_{t \rightarrow \infty} U(\bar{\mathbf{X}}^{\text{($t$)}}) = \max_{t \rightarrow \infty} \sum_{i=1}^N \log(\bar{X}_i^{\text{($t$)}}).
\end{flalign}
And we can split the proportional fairness metric over time by rewriting the utility function up to time slot $T$,
\begin{flalign}
U(\bar{\mathbf{X}}^{\text{($T$)}}) =&  \sum_{i=1}^N \log(\bar{X}_i^{\text{($T$)}}) \nonumber\\
=& \sum_{i=1}^N \log\bigg((1-\frac{1}{B}) \bar{X}_i^{\text{($T$-1)}} + \frac{1}{B} R_i^{\text{($T$)}}\bigg)\nonumber \\
=& \sum_{i=1}^N \log\bigg((1-\frac{1}{B})\bar{X}_i^{\text{($T$-1)}}(1+\frac{R_i^{\text{($T$)}}}{(B-1)\bar{X}_i^{\text{($T$-1)}}})\bigg)\nonumber\\
=& \sum_{i=1}^N (\log \bar{X}_i^{\text{($0$)}} +  \sum_{t=1}^{T}r_i^{\text{($t$)}} ) \nonumber\\
=& \sum_{i=1}^N \log \bar{X}_i^{\text{($0$)}} + \sum_{t=1}^T \sum_{i=1}^N r_i^{\text{($t$)}},
\end{flalign}
where 
\begin{flalign}
r_i^{\text{($t$)}} = \log\bigg((1-\frac{1}{B})\bigg(1+\frac{R_i^{\text{($T$)}}}{(B-1)\bar{X}_i^{\text{($T$-1)}}}\bigg)\bigg).
\end{flalign}

Given our framework, we can prove that the above optimization problem is a
networked multi-agent I-POMDP, as shown in the following proposition.

\begin{proposition}\label{prop:SS}
The decentralized spectrum sharing problem defined in Section~\ref{subsec:SS} is a networked multi-agent I-POMDP.
\end{proposition}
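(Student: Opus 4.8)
The plan is to prove the proposition in the same constructive style as the preceding application: exhibit an explicit tuple $\langle \mathcal{G}, \{IS_i\}_{i\in[N]}, \{A_i\}_{i\in[N]}, \{T_i\}_{i\in[N]}, \{\Omega_i\}_{i\in[N]}, \{O_i\}_{i\in[N]}, \{R_i\}_{i\in[N]}, M\rangle$ and check, component by component, that it matches Definition~\ref{NMA-I-POMDP} while being built entirely out of objects already introduced in Section~\ref{subsec:SS}. First I would take $\mathcal{G}=(\mathcal{V},\mathcal{E})$ to be the backhaul network itself, so that $\partial i$ is the set of base stations backhauled with $i$; set the action space $A_i=\{0,1\}$ (silent/transmit); and declare the per-step reward of agent $i$ to be $R_i = r_i^{\text{($t$)}}$ exactly as defined at the end of Section~\ref{subsec:SS}. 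The key point justifying the reward choice is the telescoping identity already derived there, $U(\bar{\mathbf{X}}^{\text{($T$)}}) = \sum_{i=1}^N \log \bar{X}_i^{\text{($0$)}} + \sum_{t=1}^T \sum_{i=1}^N r_i^{\text{($t$)}}$, which shows that maximizing the global proportional-fairness utility is, up to the additive constant $\sum_i \log \bar{X}_i^{\text{($0$)}}$, the same as maximizing the globally averaged return $\sum_i \sum_t r_i^{\text{($t$)}}$ over the network, i.e., the collective objective of the networked multi-agent I-POMDP.

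Next I would pin down the state, transition, and observation structure. A natural choice is $s^{\text{($t$)}} = (\bar{\mathbf{X}}^{\text{($t$-1)}}, \mathbf{h}^{\text{($t$)}})$, the vector of long-term average rates together with the current channel coefficients, and then $IS_i = S \times M_{\partial i}$ with $M_{\partial i}$ the intentional models of $i$'s neighbors. The transition function $T_i$ is read off by composing the recursion $\bar{X}_i^{\text{($t$)}} = (1-\frac{1}{B})\bar{X}_i^{\text{($t$-1)}} + \frac{1}{B} R_i^{\text{($t$)}}$ with the stochastic channel model; since $R_i^{\text{($t$)}}$ depends on the joint action $\mathbf{a}^{\text{($t$)}}$ only through $a_i^{\text{($t$)}}$ and the actions of its interferers, this has the required signature $S \times A_i \times A_{\partial i} \times S \to [0,1]$. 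For the observation model I would let $\Omega_i$ record what base station $i$ learns during the contention and data phases (its drawn slot in the contention order and its realized $\text{SINR}_i^{\text{($t$)}}$ or rate), with $O_i$ the induced conditional law, which again factors through $(s, a_i, a_{\partial i})$ via the $\text{SINR}$ expression, matching $O_i: S \times A_i \times A_{\partial i} \times \Omega_i \to [0,1]$. Finally the message type $M$ is taken to be `action', since to evaluate $\text{SINR}_i^{\text{($t$)}}$ an agent needs the transmit/silent decisions of its interfering neighbors; `observation' is an equally valid choice if base stations instead exchange measured interference levels.

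The step I expect to be the main obstacle is the mismatch between the interference neighborhood and the communication (backhaul) neighborhood: the denominator of $\text{SINR}_i^{\text{($t$)}}$ contains $\sum_{j\neq i} h_{ji}^{\text{($t$)}} P_t a_j^{\text{($t$)}}$, a sum over \emph{all} base stations, whereas the networked I-POMDP model couples agent $i$'s dynamics only to $a_{\partial i}$. The clean remedy is to add the standing assumption that interference is local, i.e., $h_{ji}^{\text{($t$)}} \approx 0$ for $j \notin \partial i$, or equivalently to define $\partial i$ as the set of non-negligible interferers of $i$ and assume that set coincides with (or is contained in) the backhaul neighborhood; with this, $R_i^{\text{($t$)}}$, and hence $T_i$ and $O_i$, genuinely depend on $(s^{\text{($t$)}}, a_i^{\text{($t$)}}, a_{\partial i}^{\text{($t$)}})$ alone. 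Once the tuple is fixed, verifying the \textbf{MNM} and \textbf{MNO} assumptions is routine: a base station's transmit decision never rewrites a neighbor's frame or belief — it only alters the shared physical quantities $\bar{\mathbf{X}}$ and the observed $\text{SINR}$s, which is precisely the legitimate \emph{indirect} influence permitted by the I-POMDP model — and no base station reads another's internal model directly. With the proposition in hand, the corollary that Algorithm~\ref{alg:DecBP} together with Equation~(\ref{equ:OA-NMA}) yields the optimal base-station policies follows immediately from Theorem~\ref{thm:OP}, exactly as in the decentralized control application.
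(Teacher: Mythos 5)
Your construction is correct and takes essentially the same route as the paper's own proof: exhibit the tuple $\langle \mathcal{G}, \{IS_i\}_{i\in[N]}, \{A_i\}_{i\in[N]}, \{T_i\}_{i\in[N]}, \{\Omega_i\}_{i\in[N]}, \{O_i\}_{i\in[N]}, \{R_i\}_{i\in[N]}, M\rangle$ component by component from the objects of Section~\ref{subsec:SS}, with the same choices of backhaul network, action space $\{0,1\}$, per-step reward $r_i^{(t)}$ (justified by the telescoping rewriting of the proportional-fairness utility, which the paper derives just before the proposition), transition via the $\bar{X}_i$ recursion plus the channel fading model, and message type `action' or `observation'. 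The one place you go beyond the paper is the interference-locality issue: the paper simply asserts the signature $T_i: S\times A_i\times A_{\partial i}\times S \rightarrow [0,1]$ (and the analogous one for $O_i$) even though $\text{SINR}_i^{(t)}$ sums interference over all $j\neq i$, whereas you make the required assumption explicit ($h_{ji}^{(t)}\approx 0$ for $j\notin\partial i$, or $\partial i$ defined to contain all non-negligible interferers), so your version actually closes a gap that the paper's proof leaves unaddressed.
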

\begin{proof}
We can prove the proposition by defining the networked multi-agent I-POMDP tuple $\langle \mathcal{G}, \{IS_i\}_{i\in [N]},\{A_i\}_{i \in [N]}, \{T_i\}_{i \in [N]}, \{\Omega_i\}_{i \in [N]}, \{O_i\}_{i \in [N]}$, $\{R_i\}_{i \in [N]},M \rangle$ for the optimization problem,
\begin{itemize}
\item The communication network $\mathcal{G}$ is directly defined in the decentralized spectrum sharing problem.
\item For each agent (base station) $i$, the interactive state $IS_i=S \times  M_{ \partial i }$, where $S$ consists of the joint space of the average rate of link $i$ and the channels between all base stations and UE $i$, i.e. $\langle \bar{X}_i^{\text{($t$)}} , \{h_{ji}\}_{j \in [N]}\rangle$ is a state in $S$, and $M_{\partial i}$ is the set of possible models of $i$'s neighbors $\partial i$.
\item For each agent (base station) $i$, the action space $A_i$ is \{\text{transmit}, \text{silent}\}, i.e., $\{0,1\}$.
\item For each agent (base station) $i$, the transition model $T_i: S \times A_i \times A_{\partial i} \times S \rightarrow [0,1]$ is given by the dynamic 
\begin{flalign*}
\bar{X}_i^{\text{($t$)}} &= (1-\frac{1}{B}) \bar{X}_i^{\text{($t$-1)}} + \frac{1}{B} R_i^{\text{($t$)}},
\end{flalign*}
and the channel fading model.
\item For each agent (base station) $i$, at time slot $t$, the observation consists of the average throughput $\bar{X}_i^{\text{($t$-1)}}$, the signal power $S_i^{\text{($t$-1)}}$ and the total interference power $I_i^{\text{($t$-1)}}$ of the previous time slot, i.e., $o_i^t = [\bar{X}_i^{\text{($t$-1)}}, S_i^{\text{($t$-1)}}, I_i^{\text{($t$-1)}}]$.
\item For each agent (base station) $i$, the observation function $O_i$ is directly given by the definitions of involving parameters in $S$, $A_i\times A_{\partial i}$ and $\Omega_i$
\item For each agent (controller) $i$, at time slot $t$, the reward function is given by $r_i^{\text{($t$)}}$.
\item The message type in this problem can be either `action' or `observation'.
\end{itemize}
\end{proof}
The Proposition~\ref{prop:SS} naturally leads to the following corollary,
\begin{corollary}
The optimal strategy of decentralized spectrum sharing problem can be obtained by running Algorithm~\ref{alg:DecBP} and given by Theorem~\ref{thm:OP}.
\end{corollary}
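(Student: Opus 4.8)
The plan is to build on the preceding theorem, which---via the Contraction Mapping Theorem (Theorem~\ref{thm:CMT}) together with Lemmas~\ref{lemma:1} and~\ref{lemma:2}---guarantees that the backup operator $H$ has a unique fixed point $U^*\in B$ and that the iteration $U^n = HU^{n-1}$ converges to $U^*$ from any initialization. The identity $U^* = HU^*$ is exactly the Bellman optimality equation~(\ref{equ:VI-NMA}); what remains is to show that $U^*$ is the \emph{optimal} value function and that the greedy policy with respect to $U^*$, namely the map $(\theta_i,\mu_j)\mapsto OPT(\theta_i,\mu_j)$ of Equation~(\ref{equ:OA-NMA}), is optimal.

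The key step is the standard policy-improvement comparison, adapted to this operator. For a stationary policy $\pi_i$ of agent $i$, let $H_{\pi_i}$ be obtained from $h(\theta_i,a_i,\mu_j,U)$ by fixing $a_i = \pi_i(\theta_i,\mu_j)$ rather than maximizing over $a_i$; repeating the estimate in the proof of Lemma~\ref{lemma:2} shows $H_{\pi_i}$ is a $\gamma$-contraction, hence has a unique fixed point $U^{\pi_i}$, the value of $\pi_i$. Now (i) $H_{\pi_i}U \le HU$ pointwise for every $U \in B$, since fixing an action cannot beat the maximum, and (ii) $H$ and each $H_{\pi_i}$ are monotone by the argument of Lemma~\ref{lemma:1}; iterating (i) and applying (ii) gives $U^{\pi_i} = \lim_n H_{\pi_i}^n U_0 \le \lim_n H^n U_0 = U^*$, so $U^*$ dominates the value of every policy. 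Conversely, if $\pi_i^*$ picks, at each pair $(\theta_i,\mu_j)$, an action in $OPT(\theta_i,\mu_j)$, then by construction $H_{\pi_i^*}U^* = HU^* = U^*$, so $U^* = U^{\pi_i^*}$ by uniqueness of the fixed point of $H_{\pi_i^*}$. Thus $\pi_i^*$ attains the upper bound and is optimal, and its prescription at every type/message pair is precisely Equation~(\ref{equ:OA-NMA}); since $i$ was arbitrary, the conclusion holds for all $i\in[N]$.

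The main obstacle is not the contraction bookkeeping but justifying that this essentially single-agent dynamic-programming argument is legitimate inside the interactive hierarchy. The belief $b_i$ ranges over $IS_i = S\times M_{\partial i}$, and both $ER_i(is,a_i)=\sum_{a_j}R_i(is,a_i,a_j)Pr(a_j\mid m_j,\mu_j)$ and the kernel $SE_{\theta_i}$ already encode, through $\tau_{\theta_j}$ and $Pr(a_j\mid\theta_j,\mu_j)$ in Proposition~\ref{prop:SE-NMA-I-POMDP}, the neighbors' own optimal behavior. So one has to check that $H$ is well-defined on $B$ for finitely nested models---which is exactly the hypothesis that the value functions of all agents are well defined---by induction on the nesting level $l$: the base case $l=0$ is an ordinary POMDP, and the inductive step treats the level-$(l-1)$ models as an already-solved, fixed part of a level-$l$ agent's environment. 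One also has to verify that the simultaneous message exchange and updates in Algorithm~\ref{alg:DecBP} are consistent with the belief update of Proposition~\ref{prop:SE-NMA-I-POMDP}, so that the trajectory produced by the algorithm is genuinely the orbit of $H$ and hence converges to $U^*$ with greedy actions given by~(\ref{equ:OA-NMA}). The routine measurability and continuity checks for $SE_{\theta_i}$ and $Pr(o_i\mid a_i,b_i)$ I would omit.
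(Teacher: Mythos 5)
Your argument, as written, does not prove this corollary. The statement is specifically about the decentralized spectrum sharing problem of Section~\ref{subsec:SS}, and the only substantive step the paper intends here is the instantiation: Proposition~\ref{prop:SS} exhibits that problem as a networked multi-agent I-POMDP by writing down the tuple explicitly (state $\langle \bar{X}_i, \{h_{ji}\}_{j\in[N]}\rangle$, action space $\{\text{transmit},\text{silent}\}$, transitions from the rate recursion and the channel fading model, observations $[\bar{X}_i, S_i, I_i]$ from the previous slot, reward $r_i^{(t)}$, message type `action' or `observation'), after which the convergence theorem for Algorithm~\ref{alg:DecBP} and Theorem~\ref{thm:OP} apply verbatim and the corollary is immediate. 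Your proposal never mentions the spectrum sharing problem, Proposition~\ref{prop:SS}, or this reduction, so it establishes nothing about the object the corollary actually speaks of.

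What you do argue --- that the unique fixed point $U^*$ of $H$ dominates the value $U^{\pi_i}$ of every stationary policy (via the policy operators $H_{\pi_i}$, monotonicity as in Lemma~\ref{lemma:1}, and the contraction estimate of Lemma~\ref{lemma:2}) and that any greedy selector from Equation~(\ref{equ:OA-NMA}) attains it --- is a reasonable and standard sketch, but it is a proof of (a strengthening of) Theorem~\ref{thm:OP} together with the convergence theorem, not of this corollary; indeed it supplies justification the paper itself omits, since Theorem~\ref{thm:OP} is stated without proof and the convergence theorem only yields existence and uniqueness of the fixed point. To prove the corollary you would still need the missing step: invoke Proposition~\ref{prop:SS} to place the spectrum sharing problem inside the framework, and verify that this particular instance meets the hypotheses of the general results (notably well-definedness of the value functions, which is not automatic here because the state contains continuous average rates and channel coefficients). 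Without that reduction, the claim about the spectrum sharing problem remains unaddressed.
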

\section{Conclusion}\label{sec:con}
In this paper, we address the problem of multi-agent I-POMDPs with networked agents. In particular, we consider the fully decentralized setting where each agent makes individual decisions and receives local rewards, while exchanging information with neighbors over the network to accomplish optimal network-wide averaged return. Within this setting, we propose a decentralized belief propagation algorithm. We provide theoretical analysis on the convergence of the proposed algorithm. And we show our framework can be applied to various applications. An interesting direction of future research is to extend our algorithms and analyses to the policy gradient methods.
\bibliographystyle{alpha}
\bibliography{sample-base}










\end{document}